\pgfplotsset{compat=newest}
\pgfplotsset{height=8cm, width=15cm,compat=1.9}
\DeclareMathOperator*{\argmin}{arg\,min}
\newcommand{\Ex}{\mathbb{E}}
\newcommand{\R}{\mathbb{R}}
\newcommand{\bellman}{\mathcal{T}}
\newcommand{\statespace}{\mathcal{S}}
\newcommand{\actionspace}{\mathcal{A}}
\newcommand{\vpi}{V^\pi}
\newcommand{\qpi}{Q^\pi}
\newcommand{\qstar}{Q^*}
\newcommand{\functionspace}{\mathcal{F}}
\newcommand{\policyparams}{\theta}
\newcommand{\ipepi}{\pi_{IPE}}
\newtheorem{theorem}{Theorem}
\newtheorem{proposition}{Proposition}
\title{Inverse Policy Evaluation for Value-based Sequential Decision-making}
\author{%
  Alan Chan* \\
  Department of Computing Science\\
  University of Alberta\\
  \texttt{achan4@ualberta.ca} \\
   \And
   Kris De Asis* \\
   Department of Computing Science \\
   University of Alberta \\
   \texttt{kldeasis@ualberta.ca} \\
   \AND 
   Richard S. Sutton\\
   Department of Computing Science \\
   University of Alberta \\
   \texttt{rsutton@ualberta.ca} \\
}
\begin{document}

\maketitle

\begin{abstract}
Value-based methods for reinforcement learning lack generally applicable ways to derive behavior from a value function. Many approaches involve approximate value iteration (e.g., $Q$-learning), and acting greedily with respect to the estimates with an arbitrary degree of entropy to ensure that the state-space is sufficiently explored. Behavior based on explicit greedification assumes that the values reflect those of \textit{some} policy, over which the greedy policy will be an improvement. However, value-iteration can produce value functions that do not correspond to \textit{any} policy. This is especially relevant in the function-approximation regime, when the true value function can't be perfectly represented. In this work, we explore the use of \textit{inverse policy evaluation}, the process of solving for a likely policy given a value function, for deriving behavior from a value function. We provide theoretical and empirical results to show that inverse policy evaluation, combined with an approximate value iteration algorithm, is a feasible method for value-based control. 
\end{abstract}

\section{Value-based Control}
Value-based methods form an important class of reinforcement learning (RL) algorithms \citep{sutton2018reinforcement}. They estimate expected outcomes conditioned on a behavior policy and use such estimates to inform decision-making.

Q-learning \citep{watkins1992q} is a popular value-based RL control algorithm which enjoys convergence guarantees in the tabular setting \citep{bertsekas1996neuro}. It aims to estimate every situation's best expected outcome, and assuming that the estimates are accurate, behaving greedily with respect to them is optimal. However, especially with function approximation, the estimates often aren't accurate and may not reflect any achievable expected outcome in an environment \cite{dadashi2019value}.
Along this vein, \citet{lu2018non} recently investigated divergence due to \textit{delusional bias}, where the Q-learning update is blind to the function approximator's representable policies.

Being off-policy, i.e., estimating outcomes under the optimal policy using data collected from \textit{any} policy different from the optimal policy, Q-learning can use fixed behavior policies or learn from fixed batches of data. While this work is applicable in these scenarios, we emphasize our focus on the online setting with changing behavior policies, where an agent must balance estimation accuracy and control performance.

A common behavior policy used with online Q-learning, $\epsilon$-greedy \citep{sutton2018reinforcement}, suffers from several issues: over-exploitation of the estimated best action, sensitivity to the setting of $\epsilon$ (fixed or scheduled annealing), and non-smoothness of the policy with respect to the policy parameters. The latter may lead to convergence issues \citep{perkins2002existence,perkins2003convergent}. 
These inadequacies of $\epsilon$-greedy behavior, coupled with the aforementioned inconsistencies of Q-learning, motivate deriving behavior that is smooth with respect to the action-value estimates, and that approaches optimality as action-value estimates improve without having to use a schedule.
Under the Markov assumption, value functions satisfy a Bellman equation. While the equation is typically used to derive value estimation algorithms, one may fix the values and instead find a policy which satisfies the relationship, a process we refer to as \textit{inverse policy evaluation}.



We emphasize our intent of not trying to propose a state-of-the-art RL algorithm, but to develop, motivate, and understand inverse policy evaluation as a novel way to derive behavior from value estimates. Our specific contributions include: 1) introducing and analyzing inverse policy evaluation for deriving behavior from a value function, 2) providing an algorithm and noting its connection with policy gradient methods, 
3) proving convergence of algorithms under this framework to the optimal value function and optimal policy, and 4) providing empirical results which demonstrate key properties of our approach.

\section{Reinforcement Learning}
Reinforcement learning (RL) formalizes the sequential decision-making problem with the Markov Decision Process (MDP) framework \citep{puterman2014markov,sutton2018reinforcement}. 
At each discrete time step $t$, an agent observes the current state $S_t \in \statespace$, where $\statespace$ is the set of states in an MDP. Given $S_t$, 
an agent selects an action $A_t \in \actionspace(S_t)$ to perform in the environment, where $\actionspace(s)$ is the set of available actions in state $s$. The environment then returns a reward $R_{t+1} \in \mathbb{R}$, and an observation of the next state $S_{t+1} \in \statespace$, sampled from the environment's transition dynamics: $p(s', r|s, a) = \Pr(S_{t+1}=s', R_{t+1}=r|S_t=s, A_t=a)$. 
An agent selects actions according to a policy $\pi(a|s) = \Pr(A_t=a|S_t=s)$, and its goal is to find an \textit{optimal policy} $\pi^*$ which from each state, maximizes the expected return: a discounted sum of rewards with discount factor $\gamma$. 

\textit{Value-based} methods estimate \textit{value functions} which quantifies a policy's performance. A \textit{state-value} function represents the expected return from starting in state $s$ and following policy $\pi$: $\vpi(s) = \Ex_\pi\left[\sum_{k = 0}^\infty \gamma^k R_{t + k + 1} \mid S_t = s\right]$. An \textit{action-value} function is similar, but further conditioned on taking immediate action $a$ in state $s$: $\qpi(s, a) = \Ex_\pi\left[\sum_{k = 0}^\infty \gamma^k R_{t + k + 1} \mid S_t = s, A_t = a\right]$. Computing a policy's value function is known as \textit{policy evaluation}. Value-based methods then rely on \textit{policy improvement}, where acting greedily with respect to a value function conditioned on another policy is guaranteed to produce an improved policy. \textit{Policy iteration} \citep{bertsekas2011approximate} interleaves policy evaluation and improvement until an optimal policy is found. These approaches contrast \textit{policy gradient} methods which explicitly parameterize a policy, and updates the parameters to maximize an objective \citep{sutton2000policy, sutton2018reinforcement}. 


\section{Q-learning}
Q-learning \citep{watkins1992q} is a popular value-based method which tries to estimate the optimal value function (the optimal policy's value function) directly. Value functions can be expressed in terms of successor states' values through their \textit{Bellman equations}. For $\qpi$: $\qpi(s, a) = \Ex_{s', r}[r + \gamma \sum_{a'}{\pi(a'|s')\qpi(s', a')}]$.
When $\pi$ is greedy with respect to $Q_\pi$, the latter term becomes the maximum action-value in the next state, and we get the \textit{Bellman optimality equation}, the recursive relationship for the optimal action-value function $\qstar$. Q-learning updates action-values toward a sample-based evaluation of the Bellman optimality equation for $\qstar$:
\begin{align*}
    Q(S_t, A_t) &\leftarrow Q(S_t, A_t) + \alpha \big(R_{t+1} + \gamma \max_{a'}{Q(S_{t+1}, a')} - Q(S_t, A_t)\big)
\end{align*}
given some step size $\alpha$. By updating toward a combination of a sampled reward with the current estimate for the successor state's value, Q-learning belongs to the \textit{temporal difference} \citep{sutton1988td} family of value-based methods.

Convergence of $Q$ to $\qstar$ for every state-action pair requires that all state-action pairs are visited infinitely often \citep{bertsekas1996neuro}. Even if convergence is not guaranteed, as is the case with function approximation \citep{baird1995residual,tsitsiklis1997analysis,achiam2019towards}, it is still necessary for the agent to \textit{explore} different parts of state-space to obtain a reasonable estimate of $\qstar$ to inform decision making. 

This exploration requirement is often satisfied with $\epsilon$-greedy action selection \citep{mnih2013playing,sutton2018reinforcement}, where an agent behaves greedily with respect to its current estimates with probability $1 - \epsilon$, and behaves uniform randomly otherwise.
However, there are practical concerns with $\epsilon$-greedy behavior: 1) reliance on random exploration is likely inefficient in large state spaces \citep{korenkevych2019autoregressive}; 2) due to estimation error or representational capacities, the greedy action may be a poor choice \citep{hasselt2010double}; 3) needing to specify $\epsilon$ as a fixed value or an annealing schedule; and 4) the non-smoothness of the behavior policy with respect to changes in the value function can result in non-convergence \citep{perkins2002existence,perkins2003convergent,wagner2013optimistic}.

A notable alternative to $\epsilon$-greedy is a Boltzmann policy over the action-values: $\pi(\cdot|s) \propto \exp(Q(s,a)\tau^{-1})$, where $\tau > 0$ is a \textit{temperature} parameter controlling the policy's stochasticity, akin to $\epsilon$. However, such policies can be sensitive to $\tau$, more so than $\epsilon$ as the scale of Q varies considerably across problems. They may also chatter about multiple fixed-points when $\tau$ is fixed \cite{asadi2017alternative}.

Conservative policy iteration (CPI) addresses the non-smoothness issue by computing an exponential moving average of greedy policies, where the mixture rate must be carefully chosen to guarantee policy improvement \citep{kakade2002approximately}. \citet{vieillard2019deep} extends CPI to the neural network function approximation setting, emphasizing the benefit in regularizing greediness, while detailing the practical considerations for specifying the mixture rate. 


While many alternatives have been explored, many of them directly work with the policy that an agent aims to evaluate \textit{eventually}, i.e., some modification of the greedy policy. No work to our knowledge has explicitly considered the policy which corresponds with the \textit{current} values estimates.
Such an approach would preferably take estimation error into account, e.g., due to insufficient exploration or function approximation errors, rather than assume the current values are accurate. 
Given $Q = \qstar$, the policy that gives zero Bellman error \textit{is} an optimal policy by the uniqueness of the solution of the Bellman optimality equation \citep{bertsekas1996neuro}. Should $Q$ have estimation errors, the policy should still be suboptimal in a way that's directly related to the errors.

\section{Inverse Policy Evaluation}
All proofs may be found in the appendix. 

In policy evaluation \citep{sutton2018reinforcement}, a policy $\pi$ is fixed, and the goal is to estimate its value function. Let $\functionspace$ denote the space of functions available to approximate its value function. Policy evaluation is characterized by:  
\begin{align}\label{eq:problem}
    \min_{Q \in \functionspace}\, &\|Q(s, a) - (\bellman^\pi Q)(s, a)\| = \min_{Q \in \functionspace}\, \|(Q(s, a) - \Ex_{s', a'}[r(s, a) + \gamma \, Q(s', a')]\|, \nonumber
\end{align}
where $s' \sim p(\cdot|s, a)$ and $a' \sim \pi(\cdot|s')$, and $\bellman^\pi$ is the Bellman operator. There are several choices for the norm \citep{maei2011gradient}. 
Our proposed method, denoted \textit{inverse policy evaluation} (IPE), tries to derive a policy that is ``consistent'' with a given approximate action-value function in the following sense:
\begin{equation}\label{eq:eval-policy-problem}
    \pi \in \argmin_{\beta \in \Pi} \|Q(s, a) - (\bellman^\beta Q)(s, a)\|
\end{equation}
$\|\cdot\|$ is some fixed norm over state-action pairs. We call a policy $\pi$ so derived a value function's \textit{evaluation policy}. Of note, for a given $Q$, there \textit{may not exist} a policy $\pi$ such that $\qpi = Q$ \citep{dadashi2019value}. Nevertheless, it is still reasonable to consider such policies since we can quantify their performance, as we do now. 

\subsection{How does IPE account for estimation error?}
Let us solidify the intuition that the evaluation policy takes function approximation error into account. 

\begin{proposition}\label{prop:reasonable-ipe}
Assume that we are trying to estimate $\qpi$ with $Q$, for some $\qpi$. Denote the solution of \Cref{eq:eval-policy-problem} by $\ipepi$. Let $\|\cdot\|$ denote any norm under which Bellman operators are contraction mappings (e.g., infinity norm). We have the following bound.
\begin{align*}
    \|\qpi &- Q^{\ipepi}\| \leq \frac{1}{1 - \gamma}\left( (1 + \gamma) \|\qpi - Q\| + \|\bellman^{\ipepi} Q - Q\| \right)
\end{align*}
\end{proposition}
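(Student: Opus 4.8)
The plan is to reduce the proposition to two elementary consequences of the facts that every Bellman operator $\bellman^\beta$ is a $\gamma$-contraction under $\|\cdot\|$, and that $\qpi$ and $Q^{\ipepi}$ are the (unique) fixed points of $\bellman^\pi$ and $\bellman^{\ipepi}$ respectively. I would note up front that the $\argmin$ property of $\ipepi$ is not actually needed for the inequality itself: the stated bound holds with $\ipepi$ replaced by \emph{any} policy $\beta$, and the role of \algname{} is only to make the residual term $\|\bellman^{\ipepi} Q - Q\|$ as small as possible. I would also stress that $Q$ need not equal the value function of any policy, but this is irrelevant here, since the argument uses only the fixed-point identities of $\qpi$ and $Q^{\ipepi}$, not of $Q$.

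First I would prove a fixed-point lemma: for any policy $\beta$, $\|Q^\beta - Q\| \le \frac{1}{1-\gamma}\|\bellman^\beta Q - Q\|$. Using $Q^\beta = \bellman^\beta Q^\beta$ and the triangle inequality, $\|Q^\beta - Q\| \le \|\bellman^\beta Q^\beta - \bellman^\beta Q\| + \|\bellman^\beta Q - Q\| \le \gamma\|Q^\beta - Q\| + \|\bellman^\beta Q - Q\|$, and rearranging gives the claim. Applying this with $\beta = \ipepi$ controls the distance from $Q$ to $Q^{\ipepi}$ by the \algname{} residual, and applying it with $\beta = \pi$ controls the distance from $Q$ to $\qpi$ by the residual $\|\bellman^\pi Q - Q\|$.

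Next I would convert that second residual back into the approximation error to produce the $(1+\gamma)$ factor: again using $\qpi = \bellman^\pi \qpi$ and the contraction, $\|\bellman^\pi Q - Q\| \le \|\bellman^\pi Q - \bellman^\pi \qpi\| + \|\qpi - Q\| \le \gamma\|Q - \qpi\| + \|\qpi - Q\| = (1+\gamma)\|\qpi - Q\|$. Finally I would assemble everything with one triangle inequality, $\|\qpi - Q^{\ipepi}\| \le \|\qpi - Q\| + \|Q - Q^{\ipepi}\|$, bound the second summand by $\frac{1}{1-\gamma}\|\bellman^{\ipepi} Q - Q\|$ via the fixed-point lemma, bound the first summand by $\frac{1}{1-\gamma}\|\bellman^\pi Q - Q\|$ via the same lemma, and then replace $\|\bellman^\pi Q - Q\|$ using the $(1+\gamma)$ estimate to recover exactly the stated right-hand side.

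There is no deep obstacle here; the whole argument is bookkeeping with the triangle inequality and the contraction factor, so the main thing to get right is the direction in which each contraction is applied and which quantity each fixed-point identity is substituted into. The one point worth flagging is that the constant $\frac{1+\gamma}{1-\gamma}$ on $\|\qpi - Q\|$ is deliberately loose: the direct route keeps $\|\qpi - Q\|$ with coefficient $1$, and the extra factor appears only because I route the $\qpi$ term through its Bellman residual so that both error sources are presented symmetrically as residual-type terms. If a tighter constant were desired, skipping the final conversion step would instead give $\|\qpi - Q^{\ipepi}\| \le \|\qpi - Q\| + \frac{1}{1-\gamma}\|\bellman^{\ipepi} Q - Q\|$.
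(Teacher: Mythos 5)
Your proof is correct and rests on the same three ingredients as the paper's (triangle inequality, $\gamma$-contraction, fixed-point identities), but the bookkeeping is organized differently, and the difference is instructive. The paper expands $\|Q^{\ipepi}-Q\|$ through $\bellman^{\ipepi}Q$, reaches $\gamma\|Q^{\ipepi}-Q\|$, and then takes a second detour through $\qpi$ so that the term $\gamma\|Q^{\ipepi}-\qpi\|$ can be absorbed into the left-hand side; this rearrangement is what produces the $(1+\gamma)$ coefficient on $\|\qpi-Q\|$ and the global $\tfrac{1}{1-\gamma}$ factor. You instead isolate the clean lemma $\|Q^{\beta}-Q\|\le\tfrac{1}{1-\gamma}\|\bellman^{\beta}Q-Q\|$, apply it once to $\beta=\ipepi$, and then deliberately re-loosen the $\|\qpi-Q\|$ term through its own Bellman residual to match the stated constant. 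Your route makes explicit something the paper's proof obscures: the sharper bound $\|\qpi-Q^{\ipepi}\|\le\|\qpi-Q\|+\tfrac{1}{1-\gamma}\|\bellman^{\ipepi}Q-Q\|$ holds, with coefficient $1$ rather than $\tfrac{1+\gamma}{1-\gamma}$ on the approximation error (the same improvement is available from the paper's third displayed line by applying your lemma instead of the final triangle inequality). Your observations that the $\argmin$ property of $\ipepi$ is never used and that $Q$ need not be realizable as any policy's value function are both correct and are consistent with remarks the authors themselves make in the appendix (the latter is what lets them reuse this proposition inside \Cref{prop:smooth-ipe}).
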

The first norm on the right-hand side of \Cref{prop:reasonable-ipe} measures the function approximation error of $Q$ in estimating $\qpi$. The second norm on the right-hand side is exactly the objective in \Cref{eq:eval-policy-problem}, minimized by $\ipepi$. Note that this bound is tight for $Q = \qpi$, for which both sides are 0. The upshot is that the return generated by $\ipepi$ is close to the return generated by $\pi$, proportional to how close $Q$ is to $\qpi$. By examining the proof of \Cref{prop:reasonable-ipe} in the Appendix, a similar statement also holds if we replace $Q$ by $V$. 
In particular, if we select $\qpi = \qstar$, then the difference between the return of $\ipepi$ and the optimal return is directly proportional to the function approximation error $\|Q - \qstar\|$, and is zero when $Q = \qstar$.

If we start from one evaluation policy and change our action-value estimates slightly, how much does the next evaluation policy change? The next proposition quantifies this relationship.

\begin{proposition}\label{prop:smooth-ipe}
Suppose $Q_1, Q_2$ are two action-value estimates, which may or may not be actual action-values. Let $\pi_i$ denote the evaluation policy of $Q_i$. Then 
\begin{align*}
    \|Q^{\pi_1} &- Q^{\pi_2}\| \leq \frac{1}{1 - \gamma}( (1 + \gamma) \|Q_1 - Q_2\|  + \|\bellman^{\pi_1}Q_1 - Q_1\| + \|\bellman^{\pi_2} Q_2 - Q_2\|  )
\end{align*}
\end{proposition}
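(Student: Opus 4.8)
The plan is to route the gap between the two true value functions through the two estimates via the triangle inequality, and then to control each resulting policy-evaluation error by its Bellman residual. First I would insert $Q_1$ and $Q_2$ and apply the triangle inequality twice to obtain $\|Q^{\pi_1} - Q^{\pi_2}\| \leq \|Q^{\pi_1} - Q_1\| + \|Q_1 - Q_2\| + \|Q_2 - Q^{\pi_2}\|$. This isolates the middle term $\|Q_1 - Q_2\|$, by which the two estimates differ, from the two outer terms, which measure how far each estimate lies from the true value of its own evaluation policy.

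The key step is the standard lemma that for any policy $\beta$ and any function $Q$ one has $\|Q^\beta - Q\| \leq \tfrac{1}{1-\gamma}\|\bellman^\beta Q - Q\|$. I would prove it from the fixed-point identity $Q^\beta = \bellman^\beta Q^\beta$ together with the $\gamma$-contraction of $\bellman^\beta$ (under the same contraction norm assumed in \Cref{prop:reasonable-ipe}): inserting $\bellman^\beta Q$ gives $\|Q^\beta - Q\| = \|\bellman^\beta Q^\beta - Q\| \leq \|\bellman^\beta Q^\beta - \bellman^\beta Q\| + \|\bellman^\beta Q - Q\| \leq \gamma\|Q^\beta - Q\| + \|\bellman^\beta Q - Q\|$, which rearranges to the claim. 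Applying this with $(\beta, Q) = (\pi_1, Q_1)$ and with $(\beta, Q) = (\pi_2, Q_2)$ bounds the two outer terms by $\tfrac{1}{1-\gamma}\|\bellman^{\pi_1} Q_1 - Q_1\|$ and $\tfrac{1}{1-\gamma}\|\bellman^{\pi_2} Q_2 - Q_2\|$ respectively.

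Combining these yields $\|Q^{\pi_1} - Q^{\pi_2}\| \leq \|Q_1 - Q_2\| + \tfrac{1}{1-\gamma}(\|\bellman^{\pi_1} Q_1 - Q_1\| + \|\bellman^{\pi_2} Q_2 - Q_2\|)$, and since $\gamma \in [0,1)$ gives $1 \leq \tfrac{1+\gamma}{1-\gamma}$, relaxing the coefficient of $\|Q_1 - Q_2\|$ recovers the stated inequality (in fact a marginally tighter one). I expect no genuine obstacle here; this is essentially a two-sided analogue of \Cref{prop:reasonable-ipe}. The only subtleties are ensuring the chosen norm makes each $\bellman^{\pi_i}$ a contraction so that the lemma applies, and observing that the argument never uses the \emph{optimality} of $\pi_1, \pi_2$ as evaluation policies: the bound holds for arbitrary policies, with the residuals $\|\bellman^{\pi_i} Q_i - Q_i\|$ being exactly the minimized IPE objectives precisely when the $\pi_i$ are evaluation policies.
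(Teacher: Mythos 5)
Your proof is correct, and it rests on the same two ingredients as the paper's --- the triangle inequality and the contraction/fixed-point bound $\|Q^\beta - Q\| \le \tfrac{1}{1-\gamma}\|\bellman^\beta Q - Q\|$ --- but you organize them differently and end up with a strictly tighter intermediate bound. The paper splits $\|Q^{\pi_1}-Q^{\pi_2}\|$ through a single free intermediate point $Q$, invokes \Cref{prop:reasonable-ipe} twice (after observing that its proof never uses that $Q^\pi$ is a true value function), and then sets $Q=Q_1$; that asymmetric choice of pivot is what pushes the factor $(1+\gamma)/(1-\gamma)$ onto $\|Q_1-Q_2\|$. You instead split through both $Q_1$ and $Q_2$, so the cross term $\|Q_1-Q_2\|$ enters with coefficient $1$ and each outer term is handled symmetrically by your lemma, yielding $\|Q^{\pi_1}-Q^{\pi_2}\| \le \|Q_1-Q_2\| + \tfrac{1}{1-\gamma}\left(\|\bellman^{\pi_1}Q_1-Q_1\| + \|\bellman^{\pi_2}Q_2-Q_2\|\right)$, which implies the stated inequality because $1 \le (1+\gamma)/(1-\gamma)$ for $\gamma\in[0,1)$. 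Your version is marginally stronger and more symmetric; the paper's route buys only the convenience of reusing \Cref{prop:reasonable-ipe} as a black box. Your closing observations --- that the norm must make each $\bellman^{\pi_i}$ a contraction, and that the optimality of $\pi_1,\pi_2$ as evaluation policies is never actually used --- are both correct, and the latter matches a remark the authors themselves make in the appendix.
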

This smoothness result for IPE is in contrast to $\epsilon$-greedy policies, which are known to suffer from non-smoothness in changes of the action-value function estimate \citep{perkins2002existence,perkins2003convergent}. In the presence of a small action gap, noise can cause an $\epsilon$-greedy policy to shift dramatically in quality. 


To estimate an evaluation policy, just as in policy evaluation, we can derive an upper bound of the objective function in Equation \ref{eq:eval-policy-problem} with Jensen's inequality, choosing $\|\cdot\|$ to be the $\ell_2$ norm:
\begin{align}\label{eq:upper-bound-eval-policy}
    \Ex_{s, a}&[(f(s, a) - \Ex_{s', a'}[r(s, a) + \gamma \, f(s', a')])^2]\leq \Ex_{s, a, s'}[(r(s, a) + \gamma \Ex_{a'}[Q(s', a')] - Q(s, a))^2]
\end{align}
We will minimize the inner term of the RHS. Let our policy $\pi_\policyparams$ be smoothly parameterized by $\policyparams$, with step-size $\alpha_t > 0$ at time $t$. Consider a transition $(S_t, A_t, R_{t+1}, S_{t+1})$. Let $\delta := R_{t+1} + \gamma \sum_{a'} \pi_\policyparams(a'|S_{t+1})Q(S_{t+1}, a') - Q(S_t, A_t)$ be the expected TD error. We then update with $\nabla_\policyparams (\delta)^2 $. The update with respect to $\policyparams$ is:
\begin{align}\label{eq:ipe-update}
    \policyparams_{t + 1} &\leftarrow \policyparams_t - \alpha_t 2\delta \gamma \sum_{a'} \nabla_\policyparams \pi_\policyparams(a'|S_{t+1}) Q(S_{t+1}, a')
\end{align}
We note in passing that \Cref{eq:ipe-update} is remarkably similar to the \textit{all-actions policy gradient} update \citep{sutton2000policy,sutton2018reinforcement}:
\begin{equation}
    \policyparams_{t + 1} \leftarrow \policyparams_t + \alpha_t \sum_a \nabla_\policyparams \pi_\policyparams(a|s) Q(s, a)
\end{equation}
One can interpret IPE as attempting to find a policy that matches the \textit{proposed returns} of an approximate value function $V$, with $\delta$ in \Cref{eq:ipe-update} changing signs appropriately to ensure that the return is matched, rather than maximized. There may be fruitful connections to training RL agents to achieve a \textit{specific return}, rather than on solely achieving the maximum return \citep{srivastava2019training}.

\section{IPE for Control}
The results in \Cref{prop:reasonable-ipe} and \Cref{prop:smooth-ipe} motivate a strategy for using IPE for control: given some approximate value iteration procedure for learning $Q^*$ (e.g., Q-learning), one can interleave updates of approximate value iteration with inverse policy evaluation. We denote such an interleaving procedure by VI-IPE (value iteration-inverse policy evaluation). We present one possible interleaving granularity in \Cref{alg:ipe}.

\begin{algorithm}[!htb]\small
\caption{\small Approximate Value Iteration-Inverse Policy Evaluation (VI-IPE)}
\label{alg:ipe}
\begin{algorithmic}
\State Given: action-value estimate $Q$, learned policy $\pi_\policyparams$.
\State $s \sim p(s_0)$
\State $a \sim \pi(\cdot|s)$
\State $t \leftarrow 0$
\While{$t \neq t_{max}$}
    \State Draw action $a$ according to $Q(s, \cdot)$ and/or $\pi_\policyparams(\cdot|s)$
    \State $s', r \sim p(s', r|s, a)$
    \State Perform one step of approximate value iteration (e.g., Q-learning) on $Q$
    \State Update $\policyparams$ according to \Cref{eq:ipe-update}.
    \State $s \leftarrow s'$
    \State $t \leftarrow t + 1$
\EndWhile
\end{algorithmic}
\end{algorithm}

We further derive some theoretical guarantees for VI-IPE in the tabular setting.
\begin{theorem}[VI-IPE]\label{thm:vipe}
Let $V_0 \in \R^{|\statespace|}$ and set $V_{i + 1} := \bellman^* V_i$ (value iteration). Write $\|\cdot\|$ for a norm for which $\bellman^*$ and $\bellman^\pi$ are contraction mappings (e.g., a weighted maximum norm) for all policies $\pi$. Let $\pi_k \in \argmin_\pi \|\bellman^\pi V_k - V_k\|$. Then 
\begin{align*}
    \|V^{\pi_k} - V^*\| &\leq \gamma^k \|V_1 - V_0\| \left( 1 + \frac{\gamma}{1 - \gamma} \right) + \gamma^k \|V_0 - V^*\|.
\end{align*}
\end{theorem}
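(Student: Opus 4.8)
The plan is to split the error into the progress made by value iteration and the fidelity with which the IPE policy $\pi_k$ tracks its own input $V_k$. I would start from the triangle inequality
\begin{align*}
\|V^{\pi_k} - \vstar\| \leq \|V^{\pi_k} - V_k\| + \|V_k - \vstar\|,
\end{align*}
and bound the two summands separately, aiming to show the first is at most $\tfrac{\gamma^k}{1-\gamma}\|V_1 - V_0\|$ and the second at most $\gamma^k\|V_0 - \vstar\|$. Since $1 + \tfrac{\gamma}{1-\gamma} = \tfrac{1}{1-\gamma}$, these two estimates combine into exactly the stated bound, so the whole argument reduces to producing those two coefficients.

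The second summand is immediate: $\vstar$ is the fixed point of $\bellman^*$, so by the contraction property $\|V_k - \vstar\| = \|(\bellman^*)^k V_0 - (\bellman^*)^k \vstar\| \leq \gamma^k \|V_0 - \vstar\|$. For the first summand I would invoke the standard residual-to-value bound. Because $V^{\pi_k}$ is the fixed point of the $\gamma$-contraction $\bellman^{\pi_k}$, inserting $\bellman^{\pi_k} V_k$ gives $\|V^{\pi_k} - V_k\| \leq \gamma\|V^{\pi_k}-V_k\| + \|\bellman^{\pi_k}V_k - V_k\|$, and hence $\|V^{\pi_k} - V_k\| \leq \tfrac{1}{1-\gamma}\|\bellman^{\pi_k}V_k - V_k\|$. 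This reduces everything to controlling the IPE residual $\|\bellman^{\pi_k}V_k - V_k\|$.

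The crux, and the step I expect to carry the real idea, is bounding that residual by the value-iteration increment. Here I would exploit that $\pi_k$ is a \emph{minimizer} over all policies of $\|\bellman^\pi V_k - V_k\|$: in particular its residual is no larger than that of the greedy policy $\pi_g$ with respect to $V_k$, which satisfies $\bellman^{\pi_g}V_k = \bellman^* V_k = V_{k+1}$. Therefore $\|\bellman^{\pi_k}V_k - V_k\| \leq \|\bellman^{\pi_g}V_k - V_k\| = \|V_{k+1}-V_k\|$, and telescoping the contraction back to the first step yields $\|V_{k+1}-V_k\| \leq \gamma^k\|V_1 - V_0\|$. Substituting into the two displays above and reorganizing the constant gives the claim. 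The remaining pieces — that the greedy policy attains $\bellman^*$ and the telescoping contraction estimates — are routine, so the only delicate point is recognizing that minimality of $\pi_k$ permits the comparison against the greedy policy, which is precisely what ties IPE to the progress of value iteration and, notably, gives a tighter constant than a direct appeal to \Cref{prop:reasonable-ipe} would.
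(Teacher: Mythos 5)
Your proof is correct and follows essentially the same route as the paper's: both arguments rest on (i) comparing the minimizer $\pi_k$ against the greedy policy to get $\|\bellman^{\pi_k}V_k - V_k\| \leq \|V_{k+1}-V_k\| \leq \gamma^k\|V_1-V_0\|$, (ii) the fixed-point contraction argument converting that residual into a bound on $\|V^{\pi_k}-V_k\|$, and (iii) the standard value-iteration contraction for $\|V_k - V^*\|$. The only difference is that you use a two-term triangle inequality where the paper inserts $\bellman^{\pi_k}V_k$ to get three terms, which is purely cosmetic since $1 + \tfrac{\gamma}{1-\gamma} = \tfrac{1}{1-\gamma}$ makes the resulting constants identical.
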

This gives us \textit{monotonic} improvement in $V^{\pi_k}$ in terms of distance to $V^*$ as measured by by $\|\cdot\|$. We also have an approximate version of \Cref{thm:vipe}. 
\begin{theorem}[Approximate VI-IPE]\label{thm:approx-vipe}
Let $V_0 \in \R^{|\statespace|}$ and set $V_{i + 1} := \bellman^* V_i + \epsilon_{i + 1}$ (approximate value iteration). Write $\|\cdot\|$ for a norm for which $\bellman^*$ and $\bellman^\pi$ are contractions (e.g., a weighted maximum norm) for all policies $\pi$. Let $\pi_k \in \argmin_\pi \|\bellman^\pi V_k - V_k\|$. Then 
\begin{align*}
    \|V^{\pi_k}& - V^*\| \leq \left( 1 + \frac{\gamma}{1 - \gamma} \right)\Big(\gamma^k \|V_1 - V_0\| + \|\epsilon_k\| + \sum_{t = 1}^{k - 1} \gamma^t \|\epsilon_{k - t + 1} - \epsilon_{k - t}\| \Big)\\
    &\quad\quad + \gamma^k \|V_0 - V^*\| + \sum_{t = 0}^{k - 1} \gamma^t \|\epsilon_{k - t}\|.
\end{align*}
\end{theorem}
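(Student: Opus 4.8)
The plan is to mirror the proof of \Cref{thm:vipe}, splitting the error with the triangle inequality as $\|V^{\pi_k} - V^*\| \le \|V^{\pi_k} - V_k\| + \|V_k - V^*\|$ and controlling the two terms separately, now carrying the approximation errors $\epsilon_i$ through each recursion. The overall skeleton is the standard contraction argument applied twice; the novelty is purely in the error bookkeeping.

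For the first term I would use that $V^{\pi_k}$ is the fixed point of $\bellman^{\pi_k}$. Writing $V^{\pi_k} - V_k = (\bellman^{\pi_k} V^{\pi_k} - \bellman^{\pi_k} V_k) + (\bellman^{\pi_k} V_k - V_k)$ and invoking the contraction of $\bellman^{\pi_k}$ gives $\|V^{\pi_k} - V_k\| \le \frac{1}{1-\gamma}\|\bellman^{\pi_k} V_k - V_k\|$. Because $\pi_k$ minimizes $\|\bellman^\pi V_k - V_k\|$ over all policies, and the greedy policy with respect to $V_k$ realizes $\bellman^*$ (there is a $\pi$ with $\bellman^\pi V_k = \bellman^* V_k$), the residual is bounded by $\|\bellman^* V_k - V_k\|$. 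I would then bound this last quantity in the approximate setting: substituting $V_k = \bellman^* V_{k-1} + \epsilon_k$ yields $\bellman^* V_k - V_k = (\bellman^* V_k - \bellman^* V_{k-1}) - \epsilon_k$, so $\|\bellman^* V_k - V_k\| \le \gamma\|V_k - V_{k-1}\| + \|\epsilon_k\|$. The consecutive difference obeys the recursion $\|V_i - V_{i-1}\| \le \gamma\|V_{i-1} - V_{i-2}\| + \|\epsilon_i - \epsilon_{i-1}\|$, obtained by subtracting consecutive update equations; the \emph{paired} difference $\epsilon_i - \epsilon_{i-1}$ appears precisely because both iterates carry errors. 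Unrolling from $\|V_1 - V_0\|$ and reindexing gives $\gamma^k\|V_1 - V_0\| + \sum_{t=1}^{k-1}\gamma^t\|\epsilon_{k-t+1} - \epsilon_{k-t}\| + \|\epsilon_k\|$, and multiplying by $\frac{1}{1-\gamma} = 1 + \frac{\gamma}{1-\gamma}$ produces the first group of terms in the claimed bound.

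For the second term I would use $V^* = \bellman^* V^*$ together with $V_k = \bellman^* V_{k-1} + \epsilon_k$ to obtain $\|V_k - V^*\| \le \gamma\|V_{k-1} - V^*\| + \|\epsilon_k\|$, then unroll this geometric recursion down to $\|V_0 - V^*\|$ and reindex, giving $\gamma^k\|V_0 - V^*\| + \sum_{t=0}^{k-1}\gamma^t\|\epsilon_{k-t}\|$. Summing the two contributions gives the stated inequality.

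The only genuinely delicate part is the bookkeeping in the first recursion: tracking that the telescoped error enters as consecutive differences $\epsilon_{k-t+1} - \epsilon_{k-t}$ rather than as individual $\epsilon_i$'s, and aligning the powers of $\gamma$ and the summation limits with the claimed form. Everything else is the fixed-point contraction estimate reused from the exact case, so I expect this reindexing to be the main (though routine) obstacle.
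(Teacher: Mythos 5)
Your proposal is correct and follows essentially the same route as the paper's proof: the same minimality bound $\|\bellman^{\pi_k}V_k - V_k\| \le \|\bellman^* V_k - V_k\|$, the same paired-difference recursion $\|V_i - V_{i-1}\| \le \gamma\|V_{i-1}-V_{i-2}\| + \|\epsilon_i - \epsilon_{i-1}\|$, and the same approximate value iteration bound for $\|V_k - V^*\|$. The only (cosmetic) difference is that you fold the paper's two terms $\|V^{\pi_k} - \bellman^{\pi_k}V_k\| + \|\bellman^{\pi_k}V_k - V_k\|$ into a single fixed-point estimate $\|V^{\pi_k} - V_k\| \le \frac{1}{1-\gamma}\|\bellman^{\pi_k}V_k - V_k\|$, which yields the identical prefactor $1 + \frac{\gamma}{1-\gamma}$.
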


\section{IPE for Hyper-parameter Selection}\label{section:hyperparam}

When might an evaluation policy remain deterministic, or close to it? Let us imagine for the moment that $\pi$ is a tabular softmax policy with one logit for each action. If $\pi$ is deterministic at a state $s$, then the gradient of $\pi$ with respect to its logits is zero. If that were the only term in our update in \Cref{eq:ipe-update}, a deterministic policy would remain deterministic. 
However, there is another term in our update, $\delta$. If $\delta$ is large, meaning that the action value estimate is poor, the resulting gradient may be large and $\pi$ will not remain deterministic. Instead, $\pi$ will be modified to bring the TD error $\delta$ as close to zero as possible. 


As such, the entropy of the evaluation policy may reflect some level of confidence in the estimation accuracy. While one might suggest directly using the $\delta$ term as a measure of estimation accuracy, having it expressed as entropy makes it relatively straightforward to map to hyper-parameters with probabilistic interpretations, e.g., $\lambda$ in TD($\lambda$) \cite{sutton1988td}, $\gamma$ in the return, $\epsilon$ in $\epsilon$-greedy.

IPE does not explicitly account for exploration, which is essential in learning good policies. To fix this shortcoming, we also try to use IPE to select a degree of randomness in the behaviour policy. 
In this work, we consider IPE for adaptively setting $\epsilon$ in an $\epsilon$-greedy behavior policy. We do so with an \textit{entropy matching} procedure where $\epsilon$ will be chosen such that the resulting $\epsilon$-greedy policy matches the entropy of the evaluation policy in a given state. We denote this method $\epsilon$-IPE, and emphasize that when accounting for IPE's step size, this does not increase the number of parameters, and rather \textit{decreases} it when $\epsilon$ would have otherwise followed an annealing schedule.

\section{Empirical Evaluation}

\subsection{2-state Switch-Stay MDP}

We visualize the dynamics of $\epsilon$-greedy, IPE, and $\epsilon$-IPE on a simple, 2-state ``Switch-Stay'' MDP:

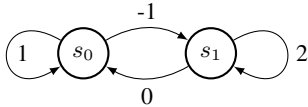
\begin{figure}[!htb]
\begin{center}
\begin{tikzpicture}[auto,node distance=10mm,>=latex,font=\small]
    \tikzstyle{round}=[thick,draw=black,circle]

    \node[round] (s0) {$s_0$};
    \node[round,right=10mm of s0] (s1) {$s_1$};

    \draw[->] (s0) to [out=30, in = 150] node {-1} (s1);
    \draw [->] (s0) to [out=150,in=210,loop] node {1} (s0);
    \draw[->] (s1) to [out=210, in=330]  node{0}(s0);
    \draw [->] (s1) to [out=30,in=330,loop] node{2} (s1);
\end{tikzpicture}
\end{center}
    \caption{The switch-stay MDP. All transitions are deterministic and the agent starts in state $s_0$.}
    \label{fig:switch-stay-mdp}
\end{figure}

\textbf{Question 1: Can IPE be used for control?}
First, we visualize the progress of the respective policies on the Switch-Stay MDP's value function polytope \citep{dadashi2019value}. We evaluate the \textit{online} performance of each algorithm for a representative run of 500 steps. Blue dots represent Q-learning's value estimates (state-values computed from the learned action-values) at each step, and green dots represent the true values of a time step's current behavior policy. Red circles highlight the initial points of each trajectory.

We compared $\epsilon$-greedy with a fixed $\epsilon=0.1$, $\epsilon$-greedy where $\epsilon$ linearly decays $\epsilon$ from $1.0$ to $0.1$ over 100 steps, using an estimated evaluation policy (labeled IPE), as well as $\epsilon$-IPE. Each agent used a Q-learning step size of $\alpha_Q = 0.5$, and the IPE variants used a policy step size of $\alpha_\pi = 0.05$. Such parameter settings were chosen to be representative of each behavior policy's dynamics.

\begin{figure}[!h]
    \centering
    \begin{subfigure}[b]{0.4\linewidth}
        \centering
        \includegraphics[width=1.0\columnwidth]{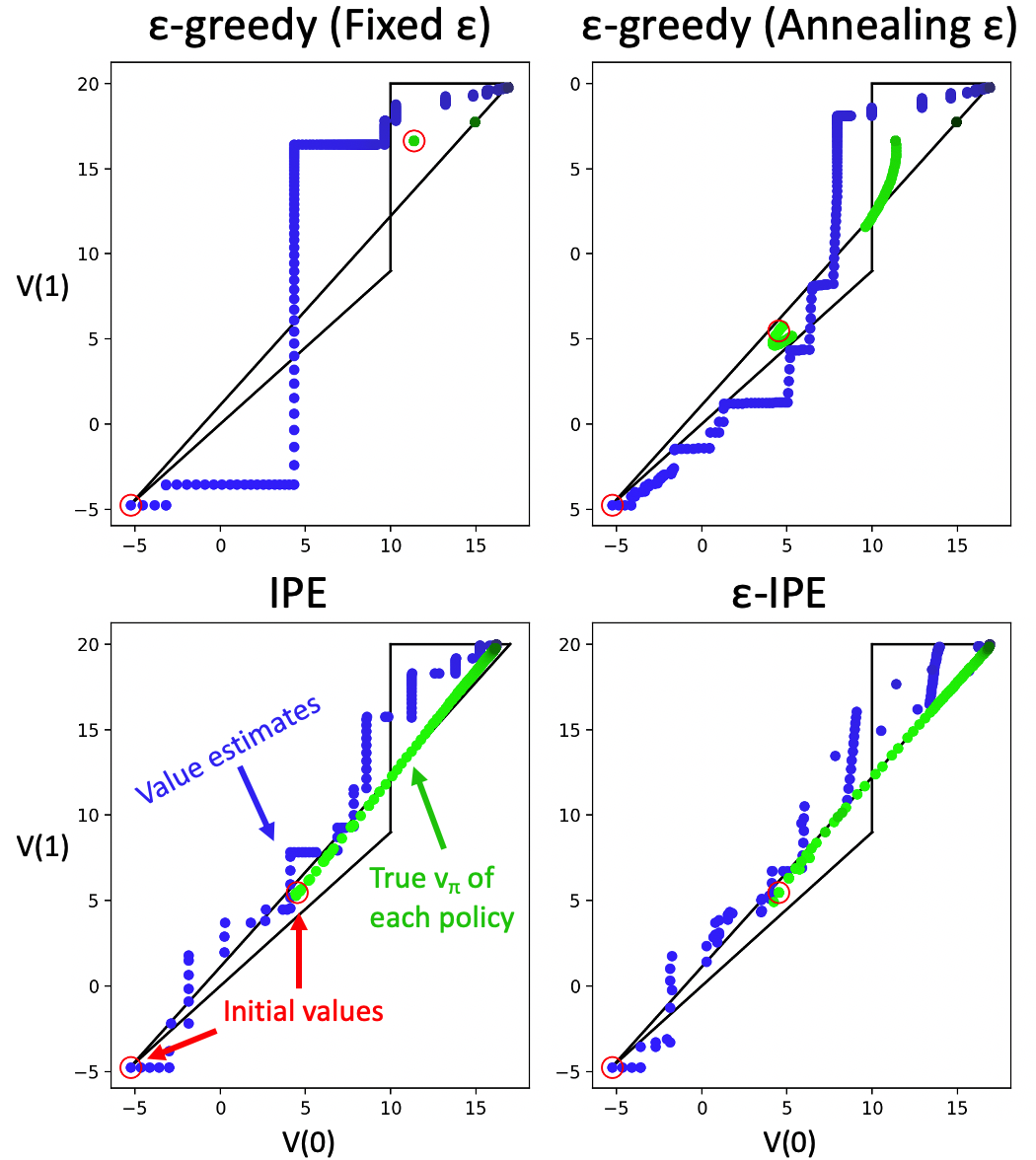}
       \caption{}\label{fig:polytope}
    \end{subfigure}\hspace{30pt}%
   \begin{subfigure}[b]{0.4\linewidth}
        \centering
        \includegraphics[width=1.0\textwidth]{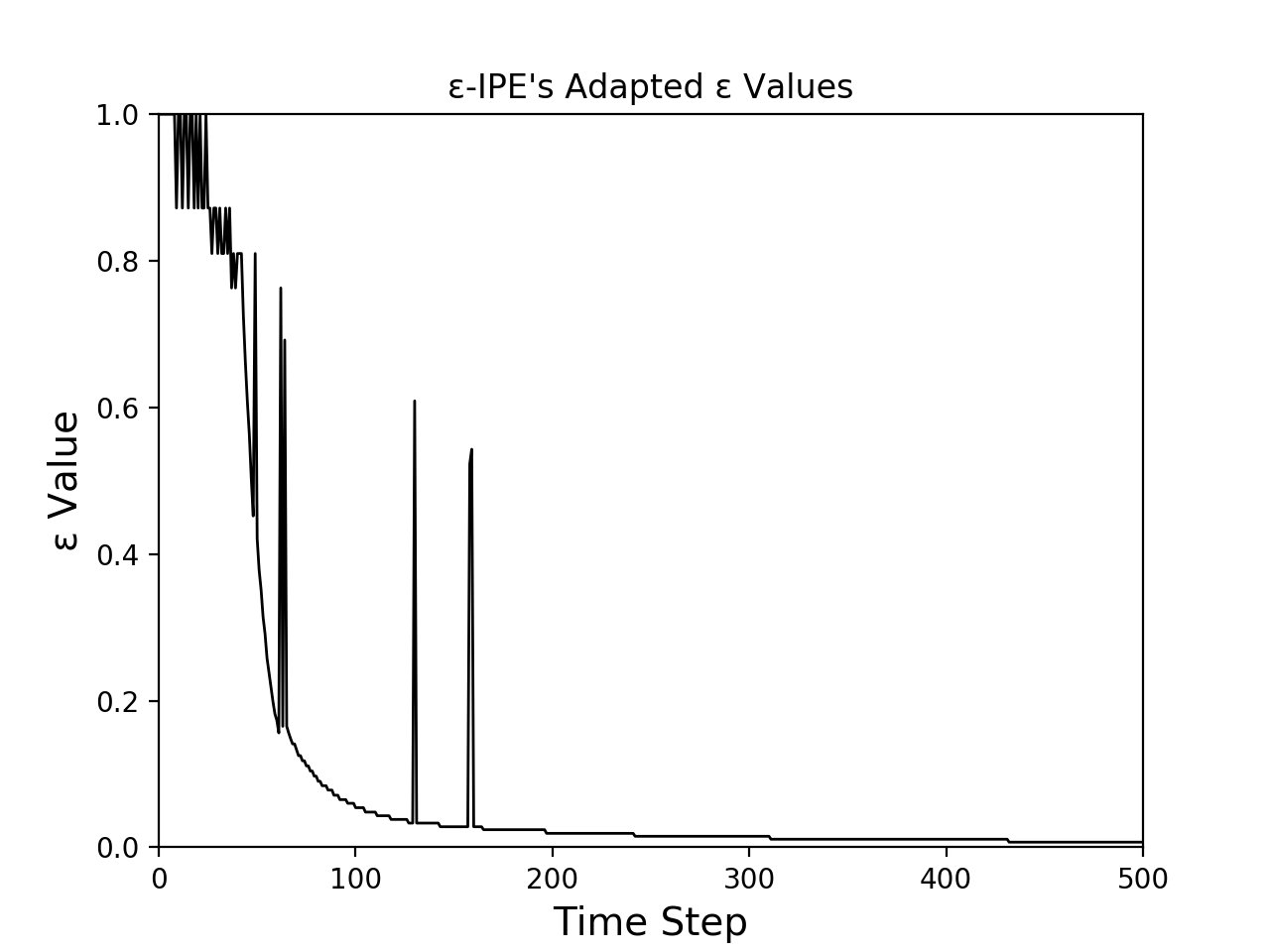}
        \caption{}
        \label{fig:adapteps}
    \end{subfigure}
    \caption{\textbf{a)} Learning dynamics on the Switch-Stay MDP's value function polytope. Blue trajectories reflect Q-learning's value estimates, while green ones reflect the true value function of the policy at each step. \textbf{b)} $\epsilon$ values adapted by an $\epsilon$-IPE behavior policy. The apparent value discretization is due to a pre-computed lookup-table approach for selecting an entropy-equivalent $\epsilon$ value.}
\end{figure}

Q-learning's value estimates can leave the polytope: i.e., the values don't correspond to any policy. Such behavior is expected of value-iteration, and is related to delusional bias \cite{dadashi2019value, lu2018non}. $\epsilon$-greedy with a fixed, small $\epsilon$ suffered more from this problem, suggesting that it may be related to drastic changes in the data distribution induced by a changing policy. Linearly annealing $\epsilon$ produces arc-shaped paths in the behavior policy values, indicative of a linear interpolation between equiprobable and greedy policies. In both the fixed and linearly annealing $\epsilon$ cases, we notice large jumps in the behavior policy values, representing switches in the value estimates' greedy actions. 

With the IPE approaches, the values of the behavior policy followed a much smoother path. While this might be expected of IPE given its policy-gradient-like update, perhaps surprisingly, adapting an $\epsilon$-greedy policy based on IPE exhibited comparable smoothness. To understand how $\epsilon$ was adapted over time, Figure \ref{fig:adapteps} shows the $\epsilon$ used at each step. A seemingly sigmoidal annealing schedule was adapted, contrasting the annealing schedules typically used in the literature \citep{mnih2013playing}.

\textbf{Question 2: How deterministic is an evaluation policy?}
To develop intuition for an evaluation policy's determinism, we swept a range of fixed value functions- pairs of $V(0) \in \{-6, -4, -2 ,\cdots , 18\}$, and $V(1) \in \{-6, -4, \cdots, 22\}$. We solve for each fixed value function's evaluation policy, compute the true value function of the evaluation policy, and show how a value function gets mapped back to the polytope in terms of the derived behavior policy. This is represented as an arrow from the fixed value function to the behavior policy's value function in the polytope. For comparison, the procedure was repeated for a greedy policy. The resulting value maps are visualized in Figure \ref{fig:value-map}. 

Most interesting is how the fixed value functions \textit{outside} of the polytope get mapped back into it. With the evaluation policies, fixed value functions that are relatively near to the polytope appear to get mapped to a nearby, non-deterministic point of the polytope. If the values are far from the polytope, they can map to a deterministic policy. A possible explanation is that if an action-value is dramatically larger than another, the all-actions policy gradient term in Equation \ref{eq:ipe-update} 
dominates and greedifies toward the large estimate. Such extreme cases seem unlikely based on the trajectories observed in Figure \ref{fig:polytope}, as it would require value-iteration to move considerably in an orthogonal direction. Taken together, this supports our analysis of an evaluation policy's determinism (Section \ref{section:hyperparam}) in terms of $\delta$ and the all-actions policy gradient, suggesting that it tend to be stochastic for reasonable deviations from the polytope.

\begin{figure}[!h]
    \centering
    \begin{subfigure}[b]{0.49\textwidth}
        \includegraphics[width=\textwidth]{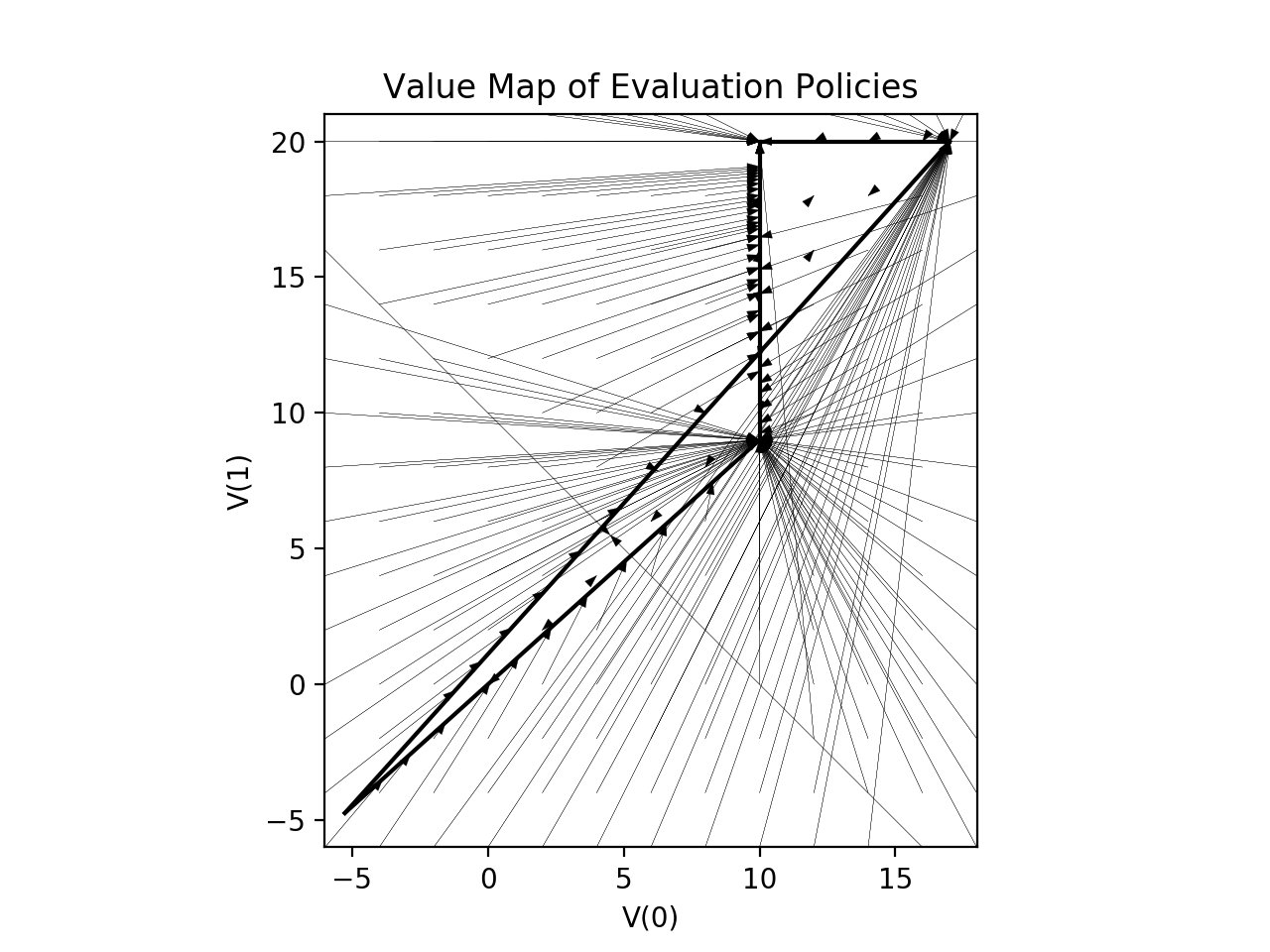}
        \label{fig:value-map-ipe}
    \end{subfigure}
    \begin{subfigure}[b]{0.49\textwidth}
        \includegraphics[width=\textwidth]{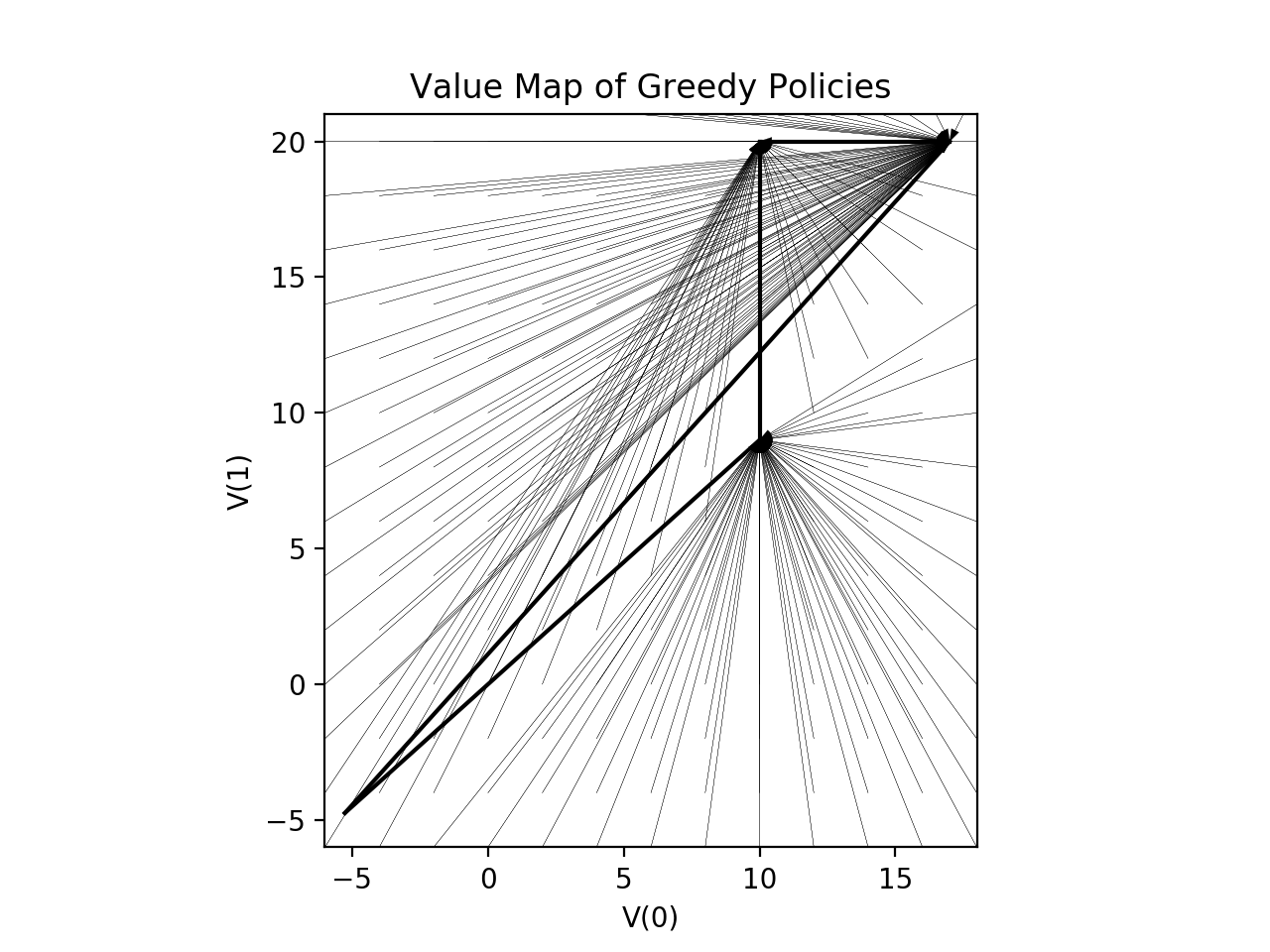}
        \label{fig:value-map-eps-greedy}
    \end{subfigure}
    \caption{A visualization of how value functions are mapped back to the value function polytope through derived behavior policies. }\label{fig:value-map}
\end{figure}


\textbf{Question 3: How sensitive is IPE to its hyperparameters in comparison with $\epsilon$-greedy?}
We varied $\epsilon$ in $\epsilon$-greedy with fixed $\epsilon$, the number of steps for an annealing $\epsilon$ to linearly anneal from $1.0$ to $0.1$, and the policy step size $\alpha_\pi$ for IPE and $\epsilon$-IPE. Each setting performed 1000 runs of 500 steps, and Figure \ref{fig:switch-stay-reward} shows the average reward over the 500 steps, as well as the final root-mean-squared error (RMSE) in the approximated optimal value function.

\begin{figure}[!h]
    \centering
    \includegraphics[width=0.8\textwidth]{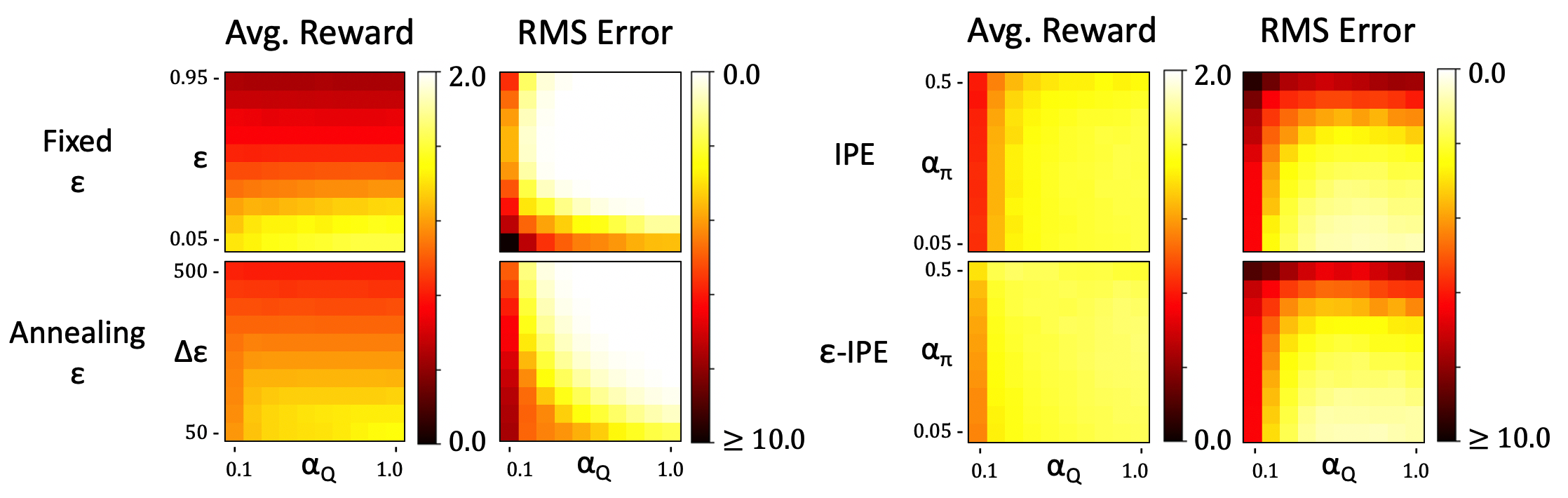}
    \caption{Hyperparameter sensitivities on the switch-stay MDP. 1000 runs of 500 steps were performed for each behavior policy's hyperparameter configuration.}\label{fig:switch-stay-reward}
\end{figure}


In \Cref{fig:switch-stay-reward}, Q-learning with $\epsilon$-greedy is quite sensitive to $\epsilon$. 
In contrast, both IPE and $\epsilon$-IPE learn the optimal policy across a wide range of learning rates. From looking at the $\epsilon$-greedy plots, both variants of $\epsilon$-greedy exhibited a negative correlation between the average reward and the value function RMSE. To attain high average reward, the agent tends to settle for an \textit{inaccurate} value function. This relationship might be due to overexploitation, and not seeing enough data to learn accurate action-values; on the other hand, a high $\epsilon$ leads to worse control performance from overexploration. 

On the other hand, IPE and $\epsilon$-IPE exhibit a positive correlation between the accuracy of the value function and the average reward obtained. Given the large overlap between the regions of high average reward and low RMSE, there seems to be, without careful parameter tuning, a natural adequate balance of (1) exploration needed to learn an accurate value function and (2) exploitation of value estimates to achieve a large expected return.

\subsection{Deep $\epsilon$-IPE}
We test $\epsilon$-IPE with function approximation on LunarLander-v2 \citep{brockman2016openai} and Freeway from the MinAtar suite \citep{young19minatar}. We note that $\epsilon$-IPE appeared much more consistent than behaving directly with the estimated evaluation policy. 

We use DQN \citep{watkins1992q,mnih2013playing} as our approximate value-iteration algorithm, and compare against $\epsilon$-greedy with a linear annealing schedule, as commonly used with DQN. We use the RMSprop optimizer \citep{tieleman2012lecture}, and sweep over the relevant hyperparameters for both DQN and $\epsilon$-IPE. Complete experimental details are in the appendix. 


Figure \ref{fig:lunar-lander} shows learning curves from DQN with $\epsilon$-IPE and, and DQN with $\epsilon$-greedy behavior over 500k frames of interaction in LunarLander-v2, averaged over 30 independent runs. For each behavior policy, the hyperparameter settings were based on the largest area under the curve among those tested.
DQN with $\epsilon$-greedy suffers from a large, consistent dip which lines up with when $\epsilon$ anneals to its final value. On the other hand, IPE enjoys relatively monotonic improvement, seemingly lower variance, better asymptotic performance, and a larger area under the curve. Figure \ref{fig:freeway} shows a similar comparison between the two behavior policies, but over 2M frames of interaction in Freeway. The areas under the curve were not significantly different in this domain, but $\epsilon$-IPE seemed to achieve a better asymptotic performance.


\begin{figure}[!htb]
    \centering
    \begin{subfigure}[b]{0.5\linewidth}
        \centering
        \includegraphics[width=\textwidth]{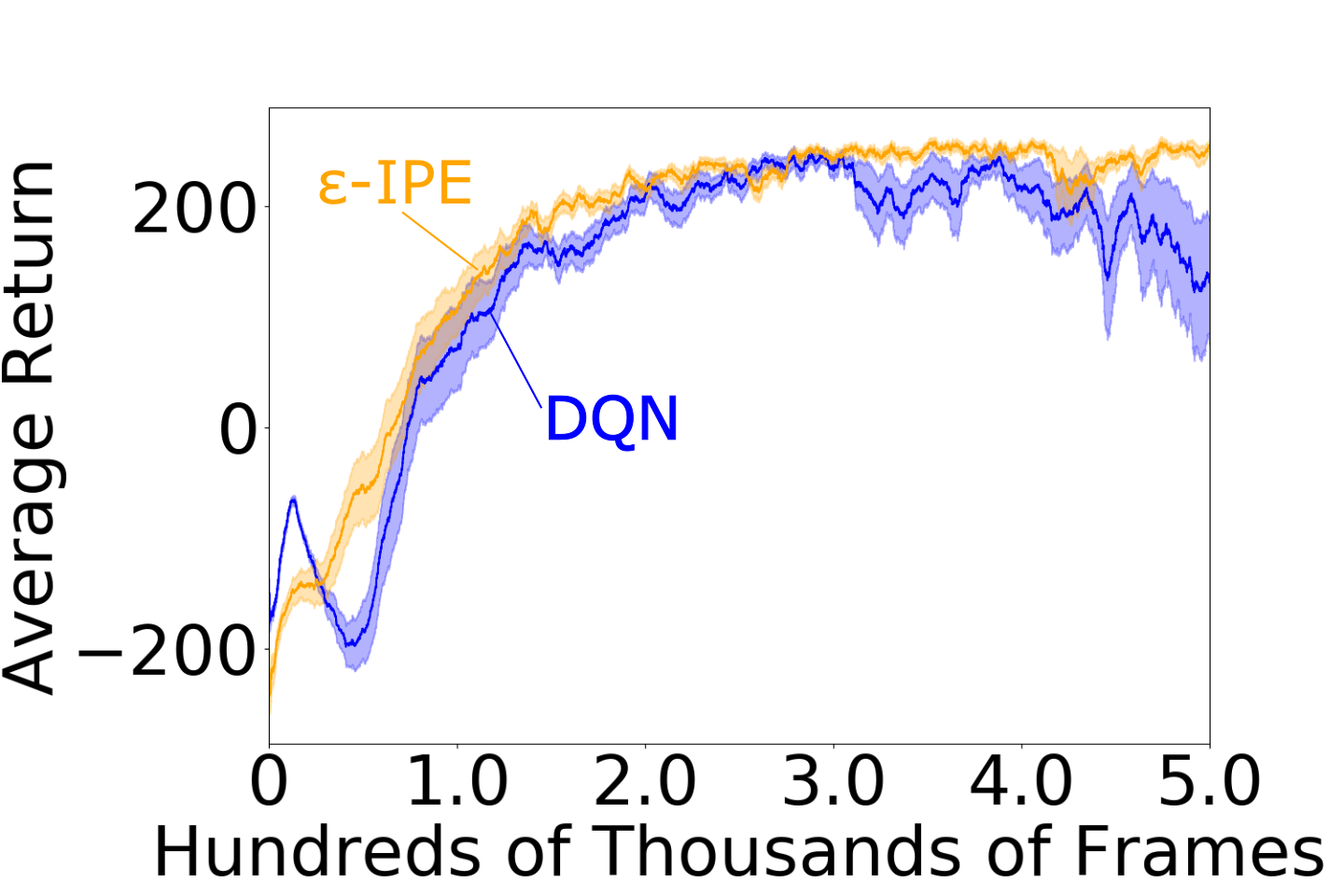}
        \caption{Lunar Lander}
        \label{fig:lunar-lander}
    \end{subfigure}%
    \begin{subfigure}[b]{0.5\linewidth}
        \centering
        \includegraphics[width=\textwidth]{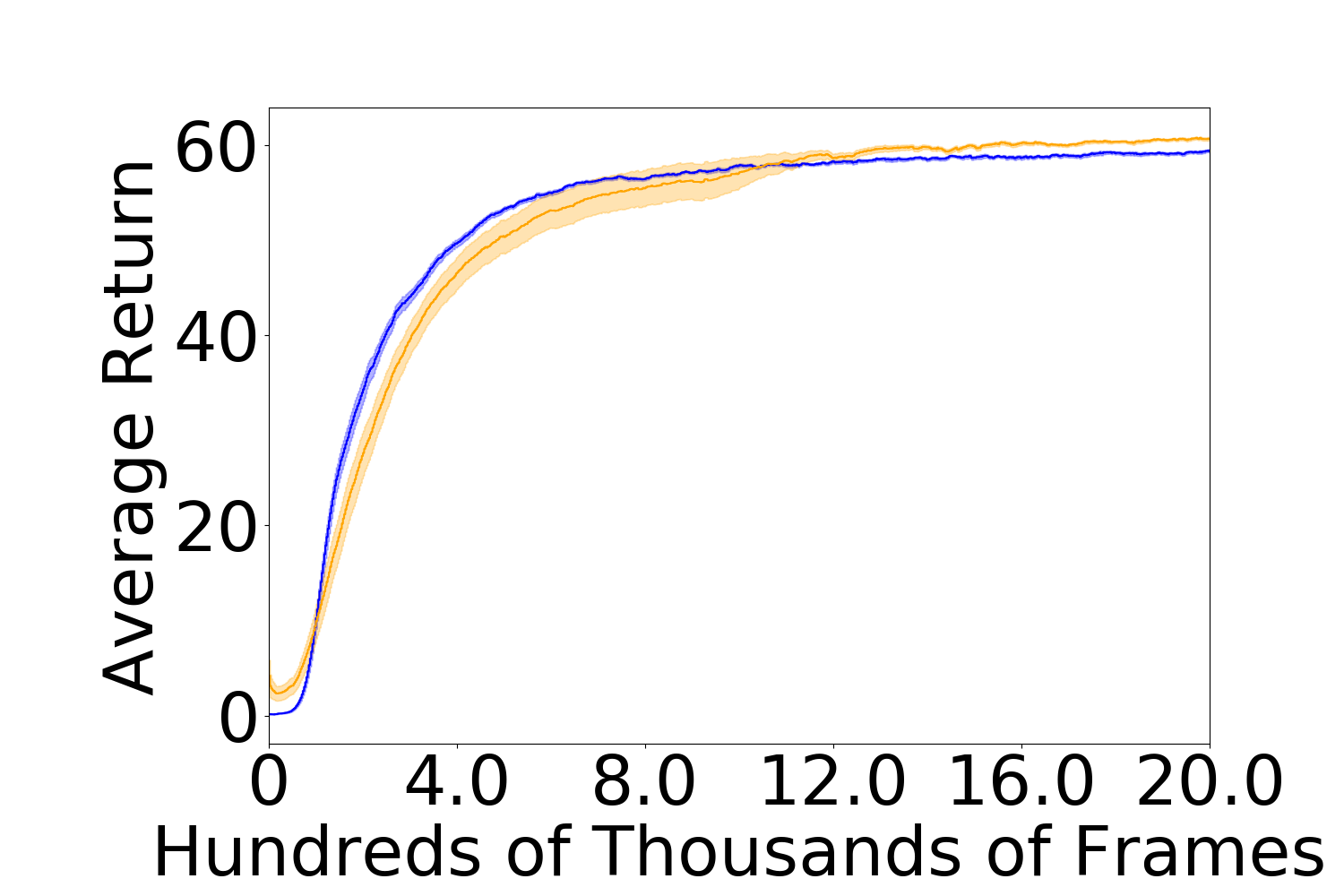}
        \caption{Freeway}
        \label{fig:freeway}
    \end{subfigure}
    \caption{Error bars represent standard error. Each point on the $y$-axis is the average return over the past 20 episodes, averaged over 30 runs.}
\end{figure}


\section{Discussion and Conclusion}
Our results suggest that IPE, when combined with an approximate value iteration algorithm, provides a novel, viable way to derive a sensible behavior policy for value-based control. We showed theoretically and empirically 1) that a learned evaluation policy can approach an optimal policy, 2) that an evaluation policy can maintain stochasticity in the face of value function estimation error, 3) that IPE can be less sensitive than $\epsilon$-greedy to the relevant hyperparameters, and 4) that an IPE-based policy can be competitive with $\epsilon$-greedy with DQN in a deep RL control task. 

This opens many avenues of future work. Different objectives can be formed for the general IPE problem, including the use of absolute Bellman errors, exploring multi-step returns, and dual formulations to optimize the true objective instead of an upper bound. When viewed from a policy gradient lens, it would be good to explore applications in batch, off-policy policy optimization, as well as the inclusion of policy gradient techniques, e.g., trust regions \citep{schulman2015trust}, baselines, and entropy regularization. Extensive evaluation with nonlinear function approximation would be good to better assess the method's scalability. 


\section*{Broader Impact}
The application of any RL agent should be subservient to legal and ethical obligations. One such obligation is predictability and explainability of the agent. The prevalent use of randomness in RL, and the instability of RL agents to randomness, is a barrier to these desiderata. Indeed, random exploration as used in $\epsilon$-greedy can lead harmful real-world impacts, like destruction of property or physical harm to living beings. Being able to control the level of randomness - and the consequences of that randomness - while still maintaining quality performance is essential. With IPE, we take a step towards the goal of reducing the need for completely random exploration in RL. 


\begin{ack}

\end{ack}

\bibliography{neurips_2020}

\section{Proofs}
\begin{proposition}\label{prop:reasonable-ipe}
Assume that we are trying to estimate $\qpi$ with $Q$, for some $\qpi$. Denote the solution of the IPE problem from $Q$ by $\ipepi$. Let $\|\cdot\|$ denote any norm under which Bellman operators are contraction mappings (e.g., infinity norm). We have the following bound.
\begin{align*}
    \|\qpi &- Q_{\ipepi}\| \leq \frac{1}{1 - \gamma}\left( (1 + \gamma) \|\qpi - Q\| + \|\bellman^{\ipepi} Q - Q\| \right)
\end{align*}
\end{proposition}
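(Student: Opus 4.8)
The plan is to exploit the contraction property of $\bellman^{\ipepi}$ twice, routing the comparison between $\qpi$ and $Q^{\ipepi}$ through the given estimate $Q$. The starting observation is that $Q^{\ipepi}$ is the unique fixed point of $\bellman^{\ipepi}$, so I can measure how far $\qpi$ is from that fixed point through $\qpi$'s own Bellman residual under $\ipepi$. Concretely, I would write $\qpi - Q^{\ipepi} = (\qpi - \bellman^{\ipepi}\qpi) + (\bellman^{\ipepi}\qpi - \bellman^{\ipepi}Q^{\ipepi})$, apply the triangle inequality, and use $\|\bellman^{\ipepi}\qpi - \bellman^{\ipepi}Q^{\ipepi}\| \leq \gamma\|\qpi - Q^{\ipepi}\|$ from contraction together with $\bellman^{\ipepi}Q^{\ipepi} = Q^{\ipepi}$. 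Rearranging the resulting $\|\qpi - Q^{\ipepi}\| \leq \|\qpi - \bellman^{\ipepi}\qpi\| + \gamma\|\qpi - Q^{\ipepi}\|$ gives the clean intermediate bound $\|\qpi - Q^{\ipepi}\| \leq \frac{1}{1-\gamma}\|\qpi - \bellman^{\ipepi}\qpi\|$.

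The second step is to control the residual $\|\qpi - \bellman^{\ipepi}\qpi\|$ in terms of the two quantities on the right-hand side of the claim. Here I would insert $Q$ and $\bellman^{\ipepi}Q$ as intermediate points: $\qpi - \bellman^{\ipepi}\qpi = (\qpi - Q) + (Q - \bellman^{\ipepi}Q) + (\bellman^{\ipepi}Q - \bellman^{\ipepi}\qpi)$. The first summand contributes $\|\qpi - Q\|$, the second is exactly the IPE objective $\|\bellman^{\ipepi}Q - Q\|$ minimized by $\ipepi$, and the third is bounded via contraction once more by $\gamma\|Q - \qpi\|$. Summing yields $\|\qpi - \bellman^{\ipepi}\qpi\| \leq (1+\gamma)\|\qpi - Q\| + \|\bellman^{\ipepi}Q - Q\|$, and substituting into the intermediate bound produces exactly the stated inequality.

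The main thing to get right is the second insertion: passing through both $Q$ and $\bellman^{\ipepi}Q$ (rather than through $\qpi$ alone) is what produces the asymmetric $(1+\gamma)$ coefficient, because the contraction is applied to the pair $(Q,\qpi)$ to convert one copy of $\|Q - \qpi\|$ into a $\gamma$-weighted term. I expect no serious obstacle beyond this bookkeeping, since the contraction and fixed-point properties are handed to us by the hypothesis on $\|\cdot\|$, and notably the minimality of $\ipepi$ is never used—the argument holds for any policy, with the IPE objective simply surfacing as the residual term. As a sanity check, a slightly tighter route that instead bounds $\|Q - Q^{\ipepi}\|$ directly through its own fixed-point residual gives $\|\qpi - Q\| + \frac{1}{1-\gamma}\|\bellman^{\ipepi}Q - Q\|$, which already implies the stated bound because $\frac{1+\gamma}{1-\gamma} \geq 1$; I would nonetheless present the symmetric derivation above to match the claimed constants, and to confirm the bound collapses to $0$ when $Q = \qpi$.
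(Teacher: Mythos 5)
Your proof is correct and uses essentially the same ingredients as the paper's: the triangle inequality routed through the waypoints $Q$ and $\bellman^{\ipepi}Q$, one application of the contraction property, the fixed-point identity $\bellman^{\ipepi}Q^{\ipepi} = Q^{\ipepi}$, and a final rearrangement to absorb a $\gamma\|\qpi - Q^{\ipepi}\|$ term --- the paper merely applies these steps in a slightly different order (bounding $\|Q^{\ipepi} - Q\|$ via its residual and re-inserting $\qpi$, rather than first isolating $\|\qpi - \bellman^{\ipepi}\qpi\|$), and likewise never invokes the minimality of $\ipepi$. Your closing remark that the route through $\|Q - Q^{\ipepi}\| \leq \frac{1}{1-\gamma}\|\bellman^{\ipepi}Q - Q\|$ yields the strictly tighter bound $\|\qpi - Q\| + \frac{1}{1-\gamma}\|\bellman^{\ipepi}Q - Q\|$ is also correct.
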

\begin{proof}
\begin{align*}
    \|Q^\pi - Q^{\ipepi}\| &\leq \|Q^\pi - Q\| + \|Q^{\ipepi} - Q\|\\
    &\leq \|Q^\pi - Q\| + \|Q^{\ipepi} - \bellman^{\ipepi} Q\| + \|\bellman^{\ipepi} Q - Q\|\\
    &\leq \|Q^\pi - Q\| + \gamma \|Q^{\ipepi} - Q\| + \|\bellman^{\ipepi} Q - Q\|\\
    &\leq \|Q^\pi - Q\| + \gamma (\|Q^{\ipepi} - Q^\pi\| + \|Q^\pi - Q\|) + \|\bellman^{\ipepi} Q - Q\|\\
\end{align*}
The first two inequalities are applications of the triangle inequality and the third inequality is from applying the facts that (1) Bellman operators are contraction mappings and that (2) $Q^{\ipepi}$ is a fixed point of $\bellman^{\ipepi}$. Rearranging, we have,
\begin{align*}
    \implies (1 - \gamma) \|Q^\pi - Q^{\ipepi}\| &\leq \|Q^\pi - Q\| + \gamma \|Q^\pi - Q\| + \|\bellman^{\ipepi} Q - Q\|\\
    \implies \|Q^\pi - Q^{\ipepi}\| &\leq \frac{1}{1 - \gamma} ((1 + \gamma) \|Q^\pi - Q\| + \|\bellman^{\ipepi} Q - Q\|).
\end{align*}
\end{proof}
The first norm on the RHS is the function approximation error of $Q$ in estimating $Q^\pi$. The second term is the IPE error, which we assume is minimized by $\ipepi$. The bound is tight since it is equal to 0 for $Q = Q^\pi$. Note that all the inequalities above still hold if we replace $Q$ by $Q$. 

If instead we have an arbitrary $Q$ and want to know what $Q^\pi$ is closest to $Q^{\ipepi}$, the bound above suggests that $Q^\pi$ must minimize $\|Q^\pi - Q\|$. 

Now, we want to say something about the smoothness of the IPE policy as our action-value estimates change. 

\begin{proposition}\label{prop:smooth-ipe}
Suppose $Q_1, Q_2$ are two action-value estimates, which may or may not be actual action-values. Let $\pi_i$ denote the evaluation policy of $Q_i$. Then 
\begin{align*}
    \|Q^{\pi_1} - Q^{\pi_2}\| \leq \frac{1}{1 - \gamma}\left( (1 + \gamma) \|Q_1 - Q_2\|  + \|\bellman^{\pi_1}Q_1 - Q_1\| + \|\bellman^{\pi_2} Q_2 - Q_2\|  \right)
\end{align*}
\end{proposition}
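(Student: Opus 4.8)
The plan is to mirror the proof of \Cref{prop:reasonable-ipe}, but to anchor symmetrically at \emph{both} estimates $Q_1$ and $Q_2$ so that both evaluation-policy residuals $\|\bellman^{\pi_1}Q_1 - Q_1\|$ and $\|\bellman^{\pi_2}Q_2 - Q_2\|$ surface in the matching form. First I would insert $Q_1$ and $Q_2$ by the triangle inequality,
\begin{align*}
    \|Q^{\pi_1} - Q^{\pi_2}\| \leq \|Q^{\pi_1} - Q_1\| + \|Q_1 - Q_2\| + \|Q_2 - Q^{\pi_2}\|,
\end{align*}
which isolates the ``estimate-mismatch'' term $\|Q_1 - Q_2\|$ together with two terms, each measuring how far an estimate sits from the true value of its own evaluation policy.

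The core step is to bound each outer term. For $i \in \{1, 2\}$ I would use that $Q^{\pi_i}$ is the unique fixed point of $\bellman^{\pi_i}$ and insert $\bellman^{\pi_i} Q_i$:
\begin{align*}
    \|Q^{\pi_i} - Q_i\| \leq \|\bellman^{\pi_i} Q^{\pi_i} - \bellman^{\pi_i} Q_i\| + \|\bellman^{\pi_i} Q_i - Q_i\| \leq \gamma \|Q^{\pi_i} - Q_i\| + \|\bellman^{\pi_i} Q_i - Q_i\|,
\end{align*}
invoking the contraction property in the last step. This is precisely the self-referential inequality that drives \Cref{prop:reasonable-ipe}; solving it yields $\|Q^{\pi_i} - Q_i\| \leq \tfrac{1}{1-\gamma}\|\bellman^{\pi_i} Q_i - Q_i\|$. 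Substituting both bounds into the triangle inequality gives the claim, after noting $1 \leq \tfrac{1+\gamma}{1-\gamma}$ to absorb the coefficient on $\|Q_1 - Q_2\|$ into the stated form (equivalently, one may reintroduce $\|Q^{\pi_1} - Q^{\pi_2}\|$ once into a contracted term, exactly as in \Cref{prop:reasonable-ipe}, to produce the $(1+\gamma)$ factor directly).

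I expect the main obstacle to be conceptual rather than computational: recognizing that the contraction bound applies only because $Q^{\pi_i}$ is a genuine fixed point of $\bellman^{\pi_i}$ (the estimates $Q_i$ themselves need not be the value function of any policy), and choosing the anchoring so that each Bellman residual appears evaluated at the matching estimate. Anchoring symmetrically at both $Q_1$ and $Q_2$ delivers $\|\bellman^{\pi_1} Q_1 - Q_1\|$ and $\|\bellman^{\pi_2} Q_2 - Q_2\|$ cleanly, whereas anchoring at a single estimate produces a residual evaluated at the wrong argument and forces extra triangle-inequality conversion steps. Beyond this choice, the argument is the same rearrangement already used in \Cref{prop:reasonable-ipe}, so I anticipate no genuine difficulty in the bookkeeping.
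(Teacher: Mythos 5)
Your proof is correct, and it takes a mildly but genuinely different route that yields a tighter intermediate bound. The paper anchors at a single point $Q$ via $\|Q^{\pi_1} - Q^{\pi_2}\| \leq \|Q^{\pi_1} - Q\| + \|Q - Q^{\pi_2}\|$, observes that the proof of \Cref{prop:reasonable-ipe} never uses that its ``$Q^\pi$'' is a true value function, applies that proposition twice with $Q^\pi$ replaced by $Q$, and then sets $Q = Q_1$; this inherits the factor $\tfrac{1+\gamma}{1-\gamma}$ on $\|Q_1 - Q_2\|$ from \Cref{prop:reasonable-ipe}'s rearrangement. You instead split three ways, isolating $\|Q_1 - Q_2\|$ with coefficient $1$, and close each outer term with the self-contained fixed-point estimate $\|Q^{\pi_i} - Q_i\| \leq \tfrac{1}{1-\gamma}\|\bellman^{\pi_i} Q_i - Q_i\|$ (obtained by solving the self-referential inequality directly rather than re-expanding $\gamma\|Q^{\pi_i} - Q_i\|$ through another triangle step, as the paper does). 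The result is the strictly stronger bound
\begin{align*}
    \|Q^{\pi_1} - Q^{\pi_2}\| \leq \|Q_1 - Q_2\| + \frac{1}{1-\gamma}\left(\|\bellman^{\pi_1}Q_1 - Q_1\| + \|\bellman^{\pi_2} Q_2 - Q_2\|\right),
\end{align*}
which implies the stated one since $1 \leq \tfrac{1+\gamma}{1-\gamma}$ for $\gamma \in [0,1)$. What the paper's route buys is modularity (the proposition is literally two invocations of an already-proved lemma); what yours buys is a sharper constant on the estimate-mismatch term and a reusable lemma bounding the distance from any estimate to the value of its evaluation policy purely by its own Bellman residual. Your identification of the key subtlety is also the right one: the contraction step is legitimate only because $Q^{\pi_i}$ is an exact fixed point of $\bellman^{\pi_i}$, while the $Q_i$ need not be realizable value functions.
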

\begin{proof}
Let $Q$ be any action-value estimate. From the triangle, inequality, we have the following.
\begin{align*}
    \|Q^{\pi_1} - Q^{\pi_2}\| &\leq \|Q^{\pi_1} - Q\| + \|Q - Q^{\pi_2}\|
\end{align*}
Now, note that the proof of \Cref{prop:reasonable-ipe} does not actually use the fact that $Q^\pi$ is the action value of any policy. Hence, applying \Cref{prop:reasonable-ipe} twice with $Q^\pi = Q$, we derive
\begin{align*}
    \|Q^{\pi_1} - Q^{\pi_2}\| &\leq \frac{1}{1 - \gamma}\left( (1 + \gamma) \left(\|Q - Q_1\| + \|Q - Q_2\|  \right) + \|\bellman^{\pi_1}Q_1 - Q_1\| + \|\bellman^{\pi_2} Q_2 - Q_2\|  \right).
\end{align*}
Setting $Q = Q_1$, we finally have
\begin{align*}
    \|Q^{\pi_1} - Q^{\pi_2}\| &\leq \frac{1}{1 - \gamma}\left( (1 + \gamma) \|Q_1 - Q_2\|  + \|\bellman^{\pi_1}Q_1 - Q_1\| + \|\bellman^{\pi_2} Q_2 - Q_2\|  \right).
\end{align*}
\end{proof}
We note that this bound is not tight since setting $Q_1 = Q_2$ does not make the RHS 0. Nevertheless, if $Q_1 = Q^{\pi_1}$ (i.e., $Q_1$ is actually an action-value function), then setting $Q_2 = Q_1$ does make both sides of the inequality 0. 

\begin{theorem}[VI-IPE]\label{thm:vipe}
Let $V_0 \in \R^{|\statespace|}$ and set $V_{i + 1} := \bellman^* V_i$ (value iteration). Write $\|\cdot\|$ for a norm for which $\bellman^*$ and $\bellman^\pi$ are contractions (e.g., a weighted maximum norm) for all policies $\pi$. Let $\pi_k \in \argmin_\pi \|\bellman^\pi V_k - V_k\|$. Then 
\begin{align*}
    \|V^{\pi_k} - V^*\| \leq  \gamma^k \|V_1 - V_0\| \left( 1 + \frac{\gamma}{1 - \gamma} \right)  + \gamma^k \|V_0 - V^*\|.
\end{align*}
\end{theorem}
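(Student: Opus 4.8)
The plan is to split the error via the triangle inequality into a policy-suboptimality piece and a value-iteration-convergence piece,
\[
\|V^{\pi_k} - V^*\| \leq \|V^{\pi_k} - V_k\| + \|V_k - V^*\|,
\]
and to control each separately. The second piece is immediate: since $\bellman^*$ is a $\gamma$-contraction with fixed point $V^*$, iterating $V_{i+1} = \bellman^* V_i$ gives $\|V_k - V^*\| \leq \gamma^k \|V_0 - V^*\|$, which already supplies the last term of the claimed bound.

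For the first piece I would use the standard relation between the value of a policy and its Bellman residual. Using the fixed-point property $V^{\pi_k} = \bellman^{\pi_k} V^{\pi_k}$ and inserting $\bellman^{\pi_k} V_k$,
\[
\|V^{\pi_k} - V_k\| \leq \|\bellman^{\pi_k} V^{\pi_k} - \bellman^{\pi_k} V_k\| + \|\bellman^{\pi_k} V_k - V_k\| \leq \gamma \|V^{\pi_k} - V_k\| + \|\bellman^{\pi_k} V_k - V_k\|,
\]
where the last step uses that $\bellman^{\pi_k}$ is a $\gamma$-contraction. Rearranging yields $\|V^{\pi_k} - V_k\| \leq \frac{1}{1-\gamma}\|\bellman^{\pi_k} V_k - V_k\|$; that is, the value of the IPE policy is controlled by exactly the IPE objective it minimizes.

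The crux is then bounding the IPE residual $\|\bellman^{\pi_k} V_k - V_k\|$, and this is where the defining property of $\pi_k$ does the work. Because $\pi_k \in \argmin_\pi \|\bellman^\pi V_k - V_k\|$, it does at least as well as the greedy policy $\pi^g$ with respect to $V_k$; and that greedy policy satisfies $\bellman^{\pi^g} V_k = \bellman^* V_k = V_{k+1}$. Hence $\|\bellman^{\pi_k} V_k - V_k\| \leq \|\bellman^* V_k - V_k\| = \|V_{k+1} - V_k\|$, and one more use of the contraction gives $\|V_{k+1} - V_k\| = \|\bellman^* V_k - \bellman^* V_{k-1}\| \leq \gamma^k \|V_1 - V_0\|$.

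Combining the pieces gives $\|V^{\pi_k} - V^*\| \leq \frac{1}{1-\gamma}\gamma^k \|V_1 - V_0\| + \gamma^k \|V_0 - V^*\|$, and rewriting $\frac{1}{1-\gamma} = 1 + \frac{\gamma}{1-\gamma}$ matches the statement. I expect the only subtle step to be this residual bound: one must recognize that minimality of the IPE objective lets $\pi_k$ be replaced by the greedy policy, whose Bellman residual is precisely the value-iteration increment $\|V_{k+1} - V_k\|$, and it is this identification that forces the whole bound to decay like $\gamma^k$. Everything else is routine contraction and triangle-inequality bookkeeping. (Note that invoking \Cref{prop:reasonable-ipe} directly with the vector $V^*$ in place of $V^\pi$ also closes the argument, but at the cost of a looser constant $\tfrac{1+\gamma}{1-\gamma}$ on the $\|V_0 - V^*\|$ term, so I would prefer the direct decomposition above to obtain the stated tighter bound.)
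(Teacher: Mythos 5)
Your proposal is correct and follows essentially the same route as the paper's proof: the same three-term decomposition (merely grouped as $\|V^{\pi_k}-V_k\|+\|V_k-V^*\|$ rather than written out in full), the same use of the argmin property to replace $\pi_k$'s residual by the greedy/value-iteration increment $\|\bellman^* V_k - V_k\| = \|V_{k+1}-V_k\| \le \gamma^k\|V_1-V_0\|$, and the same contraction bookkeeping yielding the factor $\frac{1}{1-\gamma} = 1+\frac{\gamma}{1-\gamma}$.
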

\begin{proof}
First, we have the following decomposition. 
\begin{align*}
    \|V^{\pi_k} - V^*\| &\leq \|V^{\pi_k} - \bellman^{\pi_k}V_k\| + \|\bellman^{\pi_k} V_k - V^*\|\\
    &\leq \|V^{\pi_k} - \bellman^{\pi_k}V_k\| + \|\bellman^{\pi_k} V_k - V_k\| + \|V_k - V^*\|
\end{align*}
Let us examine each term separately. The last term above can be bounded with the usual value iteration inequality. 
\begin{align*}
    \|V_k - V^*\| &= \|\bellman^* V_{k - 1} - \bellman^* V^*\|\\
    &\leq \gamma \|V_{k - 1} - V^*\|\\
    &\leq \gamma^k \|V_0 - V^*\|
\end{align*}
By definition of $\pi_k$ as an evaluation policy of $V_k$, we have
\begin{align*}
    \|\bellman^{\pi_k}V_k - V_k\| \leq \|\bellman^* V_k - V_k\|\\
    &= \|\bellman^* V_k - \bellman^* V_{k - 1}\|\\
    &\leq \gamma \|V_k  - V_{k - 1}\|\\
    &= \gamma \|\bellman^* V_{k - 1} - \bellman^* V_{k - 2}\|\\
    &\leq \gamma^2 \|V_{k - 1} - V_{k - 2}\|\\
    &\quad\vdots\\
    &\leq \gamma^k \|V_1 - V_0\|
\end{align*}
Time for the other term. 
\begin{align*}
    \|V^{\pi_k} - \bellman^{\pi_k}V_k\| &= \|\bellman^{\pi_k} V^{\pi_k} - \bellman^{\pi_k} V_k\|\\
    &\leq \gamma \|V^{\pi_k} - V_k\|\\
    &\leq \gamma \|V^{\pi_k} - \bellman^{\pi_k} V_k\| + \gamma \|\bellman^{\pi_k} V_k - V_k\|\\
    &\leq \gamma \|V^{\pi_k} - \bellman^{\pi_k} V_k\| + \gamma^{k + 1} \|V_1 - V_0\|\\
    \therefore \|V^{\pi_k} - \bellman^{\pi_k}V_k\| &\leq \frac{\gamma^{k + 1}}{1 - \gamma} \|V_1 - V_0\|.
\end{align*}
The first line comes from the Bellman equation for $V^{\pi_k}$. The fourth line comes from the inequality we just showed for $ \|\bellman^{\pi_k}V_k - V_k\|$. Putting everything together, we have 
\begin{align*}
    \|V^{\pi_k} - V^*\| &\leq \gamma^k \|V_1 - V_0\| \left( 1 + \frac{\gamma}{1 - \gamma} \right)  + \gamma^k \|V_0 - V^*\|
\end{align*}
\end{proof}
Note that \Cref{thm:vipe} gives us \textit{monotonic} improvement in $V^{\pi_k}$ in terms of distance to $V^*$ as measured by by $\|\cdot\|$. This is in contrast to $\epsilon$-greedy which enjoys no such guarantee, as we show in our experiments. 

\begin{theorem}[Approximate VI-IPE]\label{thm:approx-vipe}
Let $V_0 \in \R^{|\statespace|}$ and set $V_{i + 1} := \bellman^* V_i + \epsilon_{i + 1}$ (approximate value iteration). Write $\|\cdot\|$ for a norm for which $\bellman^*$ and $\bellman^\pi$ are contractions (e.g., a weighted maximum norm) for all policies $\pi$. Let $\pi_k \in \argmin_\pi \|\bellman^\pi V_k - V_k\|$. Then 
\begin{align*}
    \|V^{\pi_k} - V^*\| \leq \left( 1 + \frac{\gamma}{1 - \gamma} \right)\left(\gamma^k \|V_1 - V_0\| + \|\epsilon_k\| + \sum_{t = 1}^{k - 1} \gamma^t \|\epsilon_{k - t + 1} - \epsilon_{k - t}\| \right) + \gamma^k \|V_0 - V^*\| + \sum_{t = 0}^{k - 1} \gamma^t \|\epsilon_{k - t}\|.
\end{align*}
\end{theorem}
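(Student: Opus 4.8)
The plan is to mirror the proof of \Cref{thm:vipe} in structure, threading the error terms $\epsilon_i$ through each place where exact value iteration was used. I would start from the identical three-way decomposition, obtained from the triangle inequality and the Bellman equation $V^{\pi_k} = \bellman^{\pi_k} V^{\pi_k}$:
\[
\|V^{\pi_k} - V^*\| \leq \|V^{\pi_k} - \bellman^{\pi_k} V_k\| + \|\bellman^{\pi_k} V_k - V_k\| + \|V_k - V^*\|,
\]
and bound the three summands in turn. For the last summand, since $V^* = \bellman^* V^*$ and $V_k = \bellman^* V_{k-1} + \epsilon_k$, contraction of $\bellman^*$ gives the perturbed recursion $\|V_k - V^*\| \leq \gamma\|V_{k-1} - V^*\| + \|\epsilon_k\|$, which unrolls to $\gamma^k \|V_0 - V^*\| + \sum_{t=0}^{k-1}\gamma^t\|\epsilon_{k-t}\|$; this is exactly the last two summands on the right-hand side.

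The crux is the middle summand. Because $\pi_k$ minimizes $\|\bellman^\pi V_k - V_k\|$ and the greedy policy is feasible, $\|\bellman^{\pi_k} V_k - V_k\| \leq \|\bellman^* V_k - V_k\|$. Rewriting $\bellman^* V_k - V_k = (\bellman^* V_k - \bellman^* V_{k-1}) - \epsilon_k$ and using contraction yields $\|\bellman^* V_k - V_k\| \leq \gamma\|V_k - V_{k-1}\| + \|\epsilon_k\|$, so everything reduces to controlling the consecutive difference $\|V_k - V_{k-1}\|$. Here the errors enter as differences: from $V_i - V_{i-1} = (\bellman^* V_{i-1} - \bellman^* V_{i-2}) + (\epsilon_i - \epsilon_{i-1})$ one obtains $\|V_k - V_{k-1}\| \leq \gamma\|V_{k-1} - V_{k-2}\| + \|\epsilon_k - \epsilon_{k-1}\|$, whose unrolling produces the telescoping error-difference terms. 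Collecting these and re-indexing gives the bound $B := \gamma^k\|V_1 - V_0\| + \|\epsilon_k\| + \sum_{t=1}^{k-1}\gamma^t\|\epsilon_{k-t+1} - \epsilon_{k-t}\|$ on the middle summand, matching the quantity inside the large parentheses of the statement.

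For the first summand I would reuse the exact argument verbatim: $\|V^{\pi_k} - \bellman^{\pi_k} V_k\| = \|\bellman^{\pi_k} V^{\pi_k} - \bellman^{\pi_k} V_k\| \leq \gamma\|V^{\pi_k} - V_k\| \leq \gamma\|V^{\pi_k} - \bellman^{\pi_k} V_k\| + \gamma B$, so that $\|V^{\pi_k} - \bellman^{\pi_k} V_k\| \leq \frac{\gamma}{1-\gamma} B$. Adding the three bounds gives $\left(1 + \frac{\gamma}{1-\gamma}\right) B$ plus the trailing error sum, which is the claimed inequality. The main obstacle is not conceptual but the bookkeeping in the middle summand: correctly deriving the difference recursion for $\|V_k - V_{k-1}\|$ and unrolling it so the summation limits and the shifted indices $\epsilon_{k-t+1} - \epsilon_{k-t}$ align precisely with the stated form; once $B$ is identified, the remainder is the routine error-tracking analogue of \Cref{thm:vipe}.
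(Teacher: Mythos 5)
Your proposal is correct and follows essentially the same route as the paper's proof: the identical three-term decomposition, the perturbed value-iteration recursion for $\|V_k - V^*\|$, the reduction of the middle term via $\|\bellman^{\pi_k} V_k - V_k\| \leq \|\bellman^* V_k - V_k\|$ to the difference recursion $\|V_k - V_{k-1}\| \leq \gamma\|V_{k-1} - V_{k-2}\| + \|\epsilon_k - \epsilon_{k-1}\|$, and the self-bounding argument for the first term. The bookkeeping you outline for the quantity $B$ matches the paper's unrolling exactly.
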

\begin{proof}
We have the following decomposition again. 
\begin{align*}
    \|V^{\pi_k} - V^*\| &\leq \|V^{\pi_k} - \bellman^{\pi_k}V_k\| + \|\bellman^{\pi_k} V_k - V^*\|\\
    &\leq \|V^{\pi_k} - \bellman^{\pi_k}V_k\| + \|\bellman^{\pi_k} V_k - V_k\| + \|V_k - V^*\|
\end{align*}
Let us examine each term separately. The bound for the last term is, again, from the usual value iteration result. 
\begin{align*}
    \|V_k - V^*\| &\leq \|\epsilon_k\| + \gamma\|V_{k - 1} - V^*\|\\
    &\leq \gamma^k \|V_0 - V^*\| + \sum_{t = 0}^{k - 1} \gamma^t \|\epsilon_{k - t}\|
\end{align*}
By definition of $\pi_k$ as an evaluation policy of $V_k$, we have
\begin{align*}
    \|\bellman^{\pi_k}V_k - V_k\| \leq \|\bellman^* V_k - V_k\|\\
    &= \|\bellman^* V_k - \bellman^* V_{k - 1} - \epsilon_k\|\\
    &\leq \gamma \|V_k  - V_{k - 1}\| + \|\epsilon_k\|\\
    &\leq \gamma \|\bellman^* V_{k - 1} - \bellman^* V_{k - 2} + \gamma \|\epsilon_k - \epsilon_{k - 1}\| + \|\epsilon_k\|\\
    &\leq \gamma^2 \|V_{k - 1} - V_{k - 2}\| + \gamma \|\epsilon_k - \epsilon_{k - 1}\| + \|\epsilon_k\|\\
    &\quad\vdots\\
    &\leq \gamma^k \|V_1 - V_0\| + \|\epsilon_k\| + \sum_{t = 1}^{k - 1} \gamma^t \|\epsilon_{k - t + 1} - \epsilon_{k - t}\| 
\end{align*}
Time for the first term. 
\begin{align*}
    \|V^{\pi_k} - \bellman^{\pi_k}V_k\| &= \|\bellman^{\pi_k} V^{\pi_k} - \bellman^{\pi_k} V_k\|\\
    &\leq \gamma \|V^{\pi_k} - V_k\|\\
    &\leq \gamma \|V^{\pi_k} - \bellman^{\pi_k} V_k\| + \gamma \|\bellman^{\pi_k} V_k - V_k\|\\
    &\leq \gamma \|V^{\pi_k} - \bellman^{\pi_k} V_k\| + \gamma^{k + 1} \|V_1 - V_0\| + \gamma\|\epsilon_k\| + \gamma \sum_{t = 1}^{k - 1} \gamma^t \|\epsilon_{k - t + 1} - \epsilon_{k - t}\| \\
    \therefore \|V^{\pi_k} - \bellman^{\pi_k}V_k\| &\leq \frac{1}{1 - \gamma} \left(\gamma^{k + 1} \|V_1 - V_0\| + \gamma\|\epsilon_k\| + \gamma \sum_{t = 1}^{k - 1} \gamma^t \|\epsilon_{k - t + 1} - \epsilon_{k - t}\|\right)
\end{align*}
The first line comes from the Bellman equation for $V^{\pi_k}$. The fourth line comes from the inequality we just showed for $ \|\bellman^{\pi_k}V_k - V_k\|$. Putting everything together, we have 
\begin{align*}
    \|V^{\pi_k} - V^*\| &\leq \left( 1 + \frac{\gamma}{1 - \gamma} \right)\left(\gamma^k \|V_1 - V_0\| + \|\epsilon_k\| + \sum_{t = 1}^{k - 1} \gamma^t \|\epsilon_{k - t + 1} - \epsilon_{k - t}\| \right) + \gamma^k \|V_0 - V^*\| + \sum_{t = 0}^{k - 1} \gamma^t \|\epsilon_{k - t}\|
\end{align*}
\end{proof}
In \Cref{thm:approx-vipe}, we can have $V^{\pi_k} \to V^*$ if $\|\epsilon_k\| \to 0$, which takes care of the second term on the right-hand side. notice also that $\|\epsilon_k\| \to 0$ implies that $\|\epsilon_k\|$, and thus $\|\epsilon_{k - t + 1} - \epsilon_{k - t}\|$, is bounded for all $k$, which takes care of the third term. Of course, the first term goes to $0$ as $k \to \infty$. 

Notice also that in \Cref{thm:vipe} and \Cref{thm:approx-vipe}, we did not actually use that $\pi_k \in \argmin_\pi \|\bellman^\pi V_k - V_k\|$; we only used that for all $k$, $\|\bellman^{\pi_k}V_k - V_k\| \leq \|\bellman^{\pi^*} V_k - V_k\|$. This requirement is easier to satisfy than finding an evaluation policy, as one only has to improve upon the Bellman error with respect to the class of optimal policies, itself a much smaller class than the class of all policies. 

\section{Hyperparameter Sweeps for DQN}

\subsection{LunarLander-v2}
\begin{center}
    \begin{tabular}{ |c|c| } 
     \hline
     \textbf{Parameter} & \textbf{Value(s)} \\
     \hline
     Discount Rate ($\gamma$) & $0.99$ \\
     \hline
     Per Episode Frame Limit & $5000$ \\
     \hline
     Replay Buffer Size & $10^{5}$ \\
     \hline
     Mini-batch Size & $32$ \\
     \hline
     Target Net Update Period & $1, 100, \mathbf{500}$ \\
     \hline
     Initial $\epsilon$ & $1.0$ \\
     \hline
     Final $\epsilon$ & $0.2, 0.1, \mathbf{0.01}$ \\
     \hline
     $\epsilon$ Decay Steps & $\mathbf{25\textrm{k}}, 50\textrm{k}, 100\textrm{k}$ \\
     \hline
     Value Learning Rate & $10^{-5}, 10^{-4}, \mathbf{10^{-3}}$ \\
     \hline
     Hidden Layer Width & $64, 128, \mathbf{256}$ \\
     \hline
    \end{tabular}
\end{center}

\subsection{Freeway}
\begin{center}
    \begin{tabular}{ |c|c| } 
     \hline
     \textbf{Parameter} & \textbf{Value(s)} \\
     \hline
     Discount Rate ($\gamma$) & $0.99$ \\
     \hline
     Per Episode Frame Limit & $None$ \\
     \hline
     Replay Buffer Size & $10^{5}$ \\
     \hline
     Mini-batch Size & $32$ \\
     \hline
     Target Net Update Period & $1, \mathbf{100}$ \\
     \hline
     Initial $\epsilon$ & $1.0$ \\
     \hline
     Final $\epsilon$ & $0.2, 0.1, \mathbf{0.01}$ \\
     \hline
     $\epsilon$ Decay Steps & $\mathbf{100\textrm{k}}, 200\textrm{k}, 400\textrm{k}$ \\
     \hline
     Value Learning Rate & $\mathbf{10^{-5}}, 10^{-4}, 10^{-3}$ \\
     \hline
     Hidden Layer Width & $128$ \\
     \hline
    \end{tabular}
\end{center}

\section{Hyperparameter Sweeps for $\epsilon$-IPE}

\subsection{LunarLander-v2}
\begin{center}
    \begin{tabular}{ |c|c| } 
     \hline
     \textbf{Parameter} & \textbf{Value(s)} \\
     \hline
     Discount Rate ($\gamma$) & $0.99$ \\
     \hline
     Per Episode Frame Limit & $5000$ \\
     \hline
     Replay Buffer Size & $10^{5}$ \\
     \hline
     Mini-batch Size & $32$ \\
     \hline
     Target Net Update Period & $1, 100, \mathbf{500}$ \\
     \hline
     Policy Learning Rate & $10^{-5}, 10^{-4}, \mathbf{10^{-3}}$ \\
     \hline
     Value Learning Rate & $10^{-5}, 10^{-4}, \mathbf{10^{-3}}$ \\
     \hline
     Hidden Layer Width & $64, \mathbf{128}, 256$ \\
     \hline
    \end{tabular}
\end{center}

\subsection{Freeway}
\begin{center}
    \begin{tabular}{ |c|c| } 
     \hline
     \textbf{Parameter} & \textbf{Value(s)} \\
     \hline
     Discount Rate ($\gamma$) & $0.99$ \\
     \hline
     Per Episode Frame Limit & $None$ \\
     \hline
     Replay Buffer Size & $10^{5}$ \\
     \hline
     Mini-batch Size & $32$ \\
     \hline
     Target Net Update Period & $1, \mathbf{100}$ \\
     \hline
     Policy Learning Rate & $10^{-5}, 10^{-4}, \mathbf{10^{-3}}$ \\
     \hline
     Value Learning Rate & $\mathbf{10^{-5}}, 10^{-4}, 10^{-3}$ \\
     \hline
     Hidden Layer Width & $128$ \\
     \hline
    \end{tabular}
\end{center}

\section{Sensitivity Curves}

\subsection{LunarLander-v2}

\begin{figure}[!htb]
    \centering
    \begin{subfigure}[b]{0.3\linewidth}
        \centering
        \includegraphics[width=\textwidth]{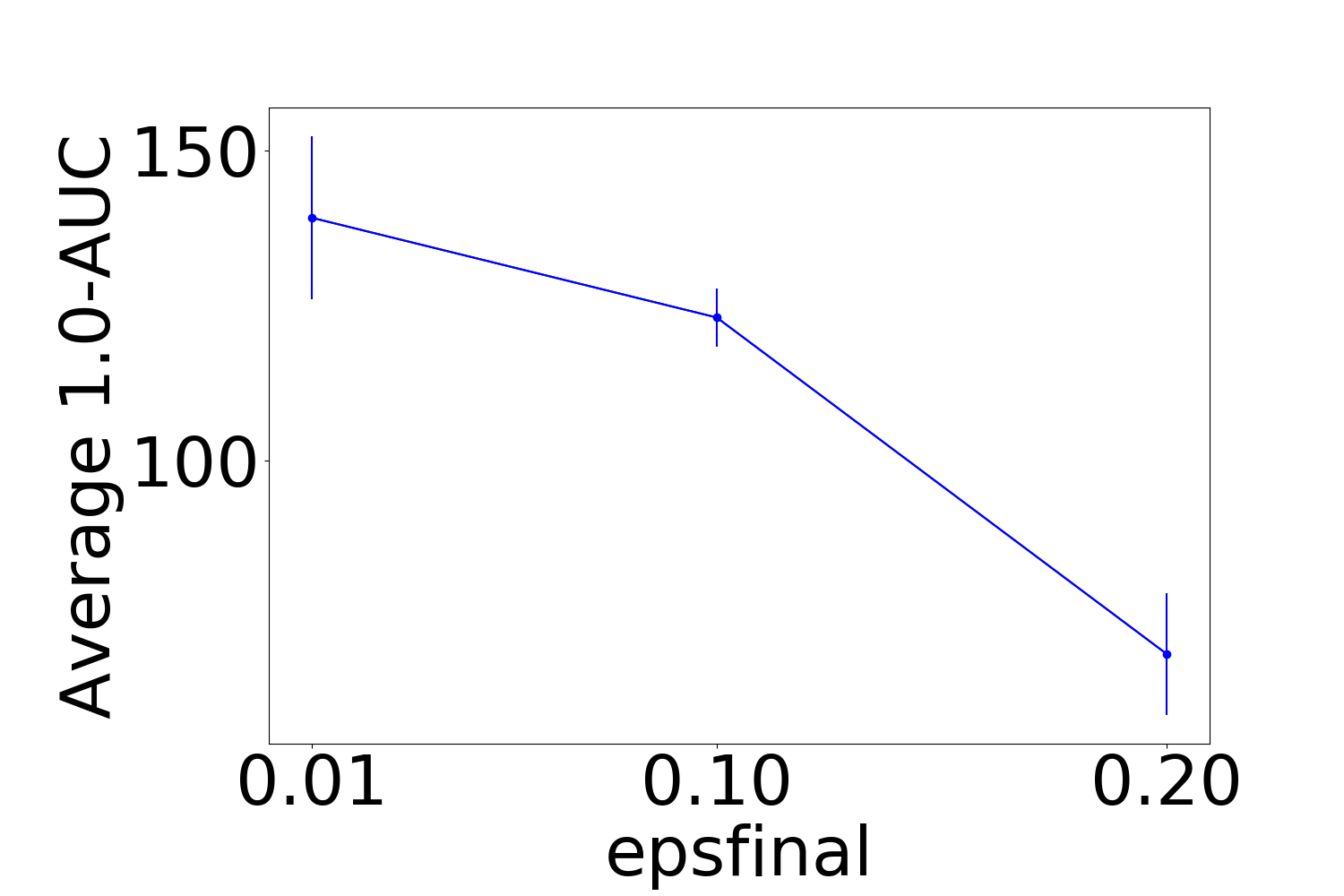}
        \caption{Final $\epsilon$}
    \end{subfigure}
    \begin{subfigure}[b]{0.3\linewidth}
        \centering
        \includegraphics[width=\textwidth]{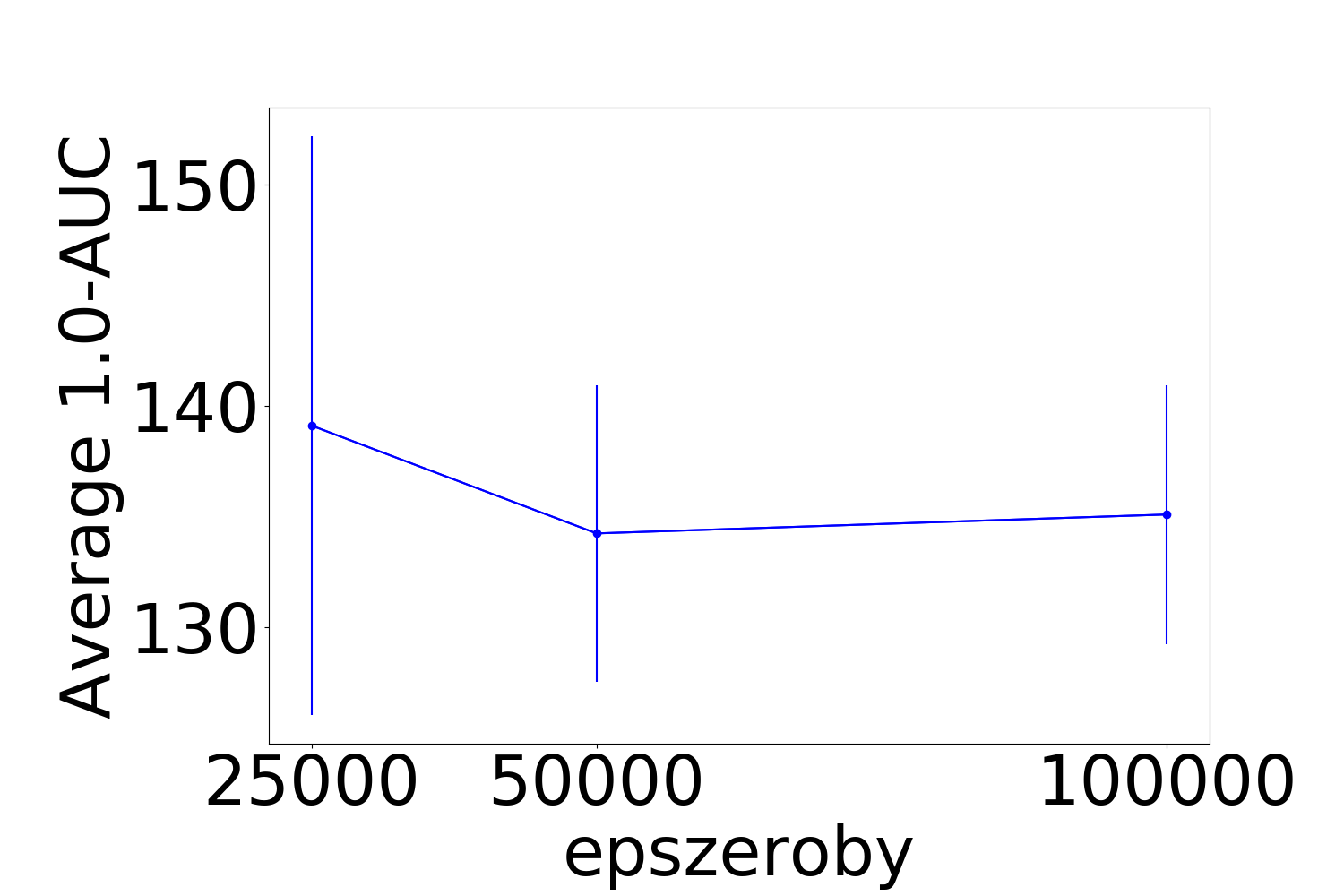}
        \caption{$\epsilon$ Decay Steps}
    \end{subfigure}
    \begin{subfigure}[b]{0.3\linewidth}
        \centering
        \includegraphics[width=\textwidth]{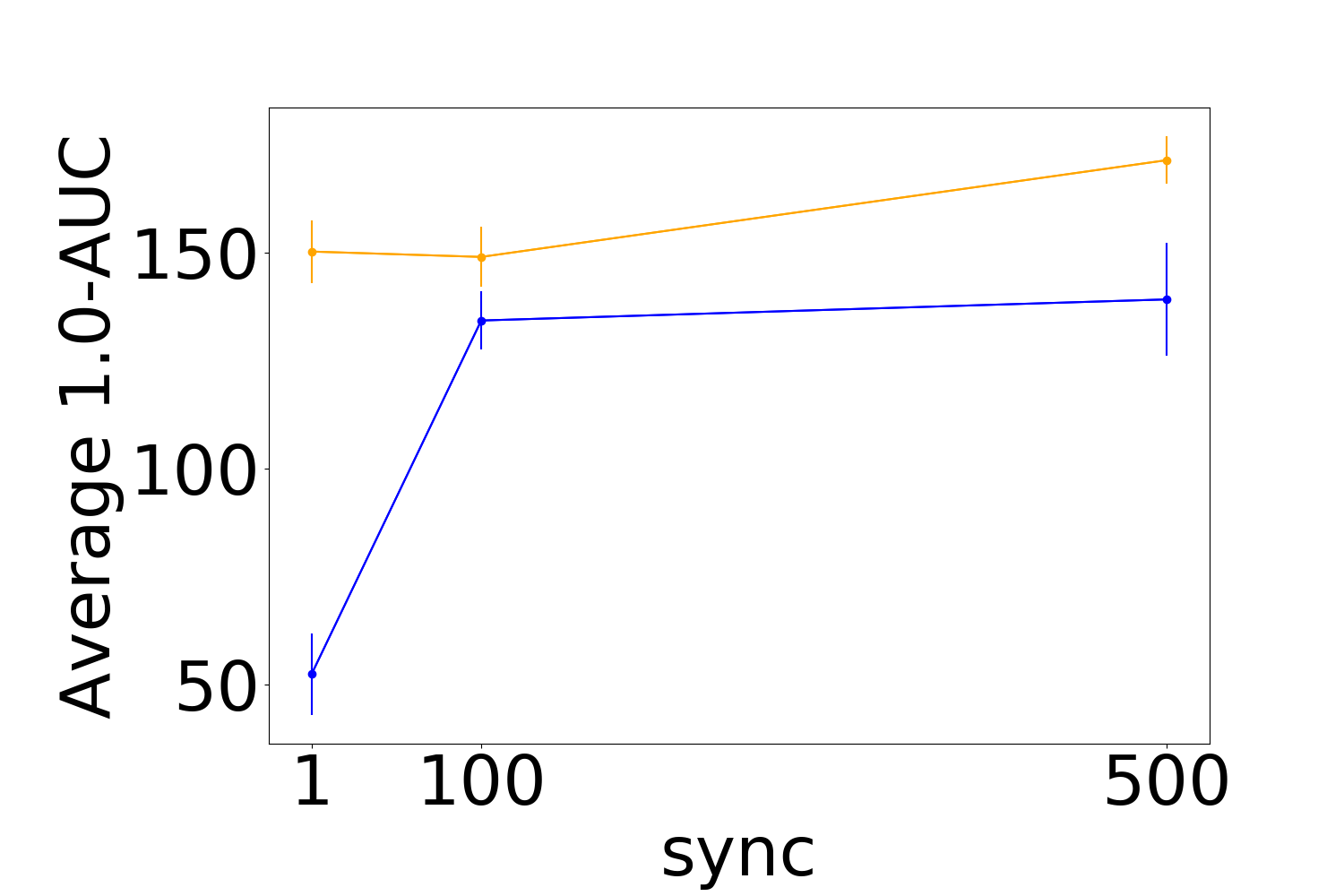}
        \caption{Target Net Update Period}
    \end{subfigure}
    \begin{subfigure}[b]{0.3\linewidth}
        \centering
        \includegraphics[width=\textwidth]{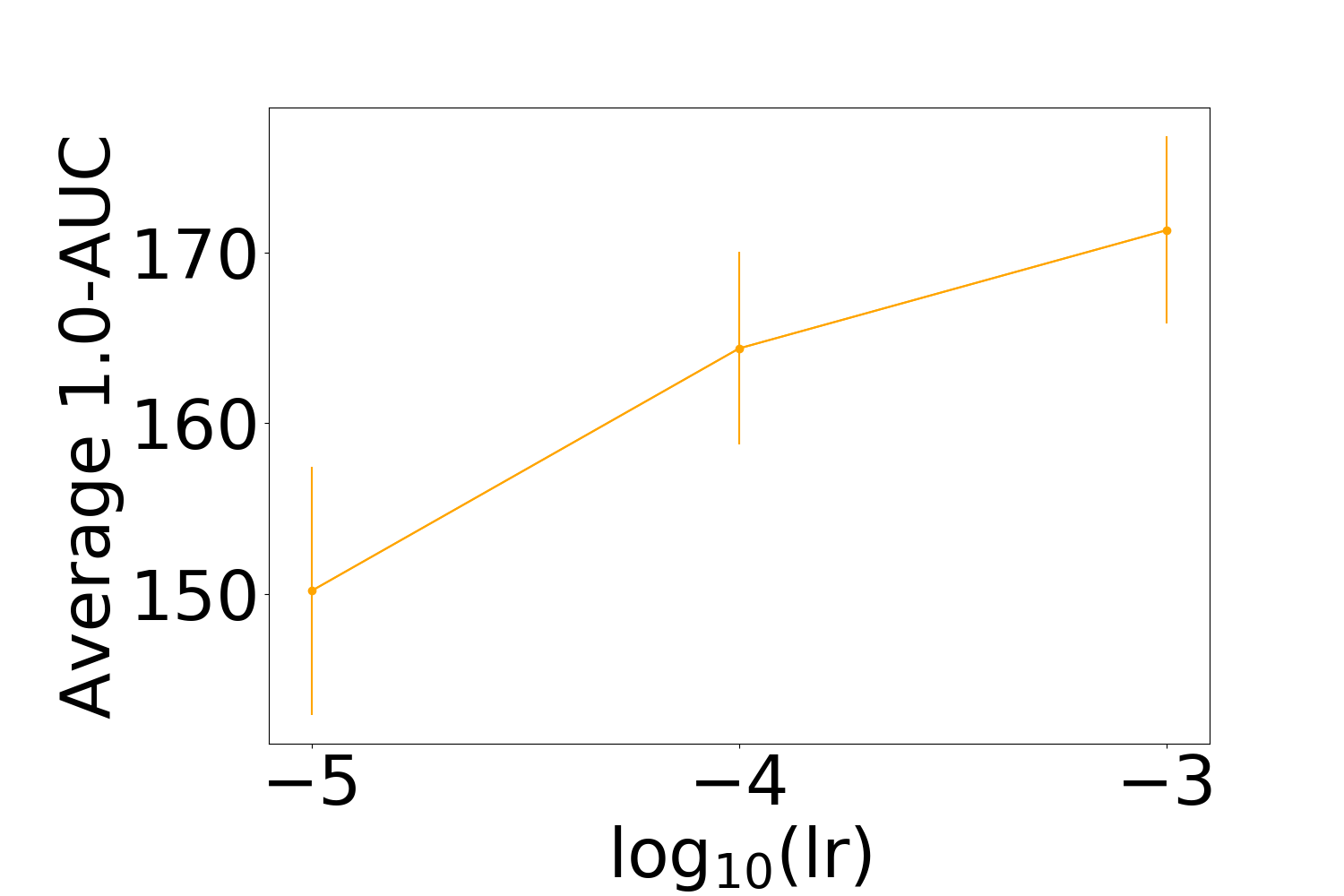}
        \caption{Policy Learning Rate}
    \end{subfigure}
    \begin{subfigure}[b]{0.3\linewidth}
        \centering
        \includegraphics[width=\textwidth]{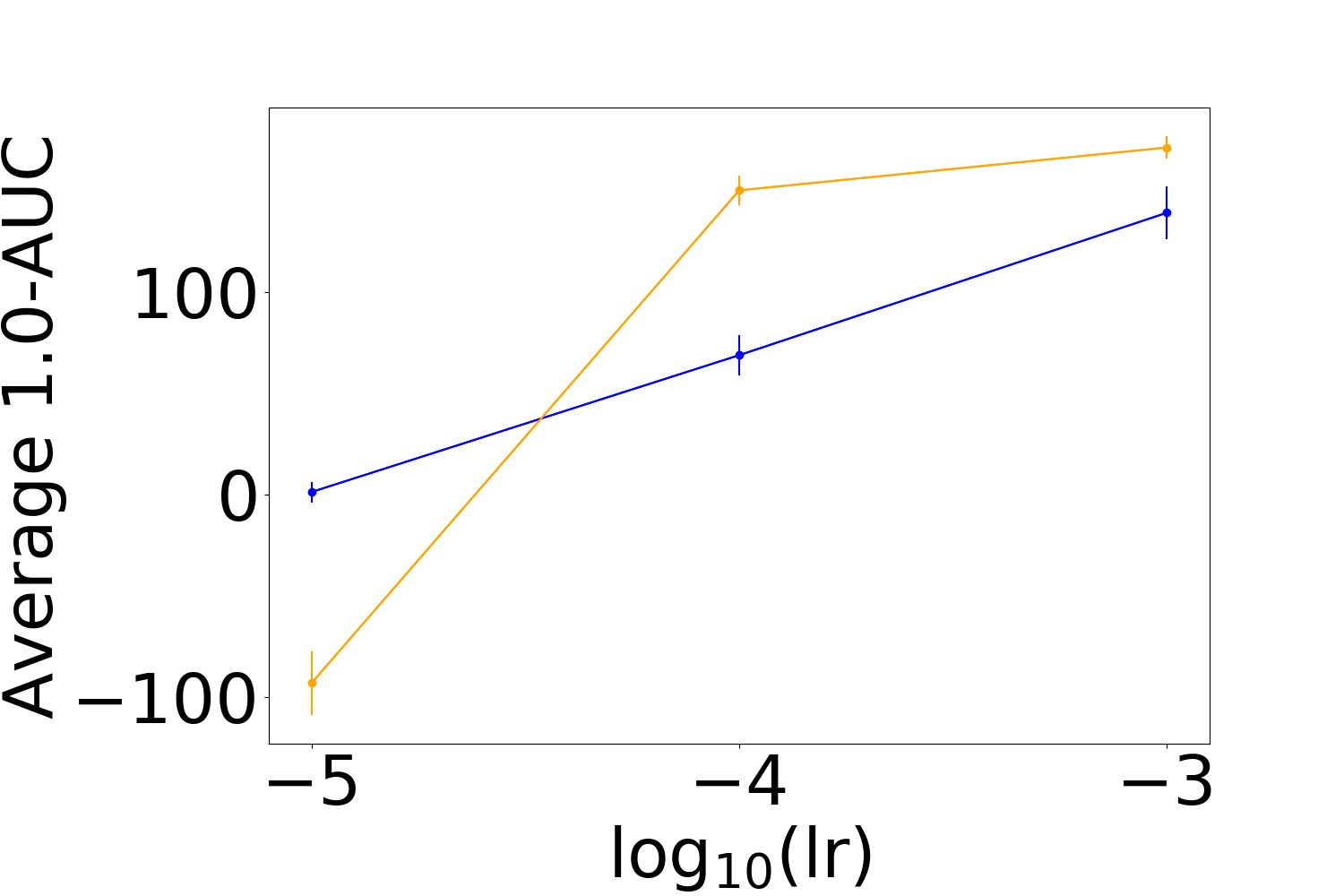}
        \caption{Value Learning Rate}
    \end{subfigure}
    \begin{subfigure}[b]{0.3\linewidth}
        \centering
        \includegraphics[width=\textwidth]{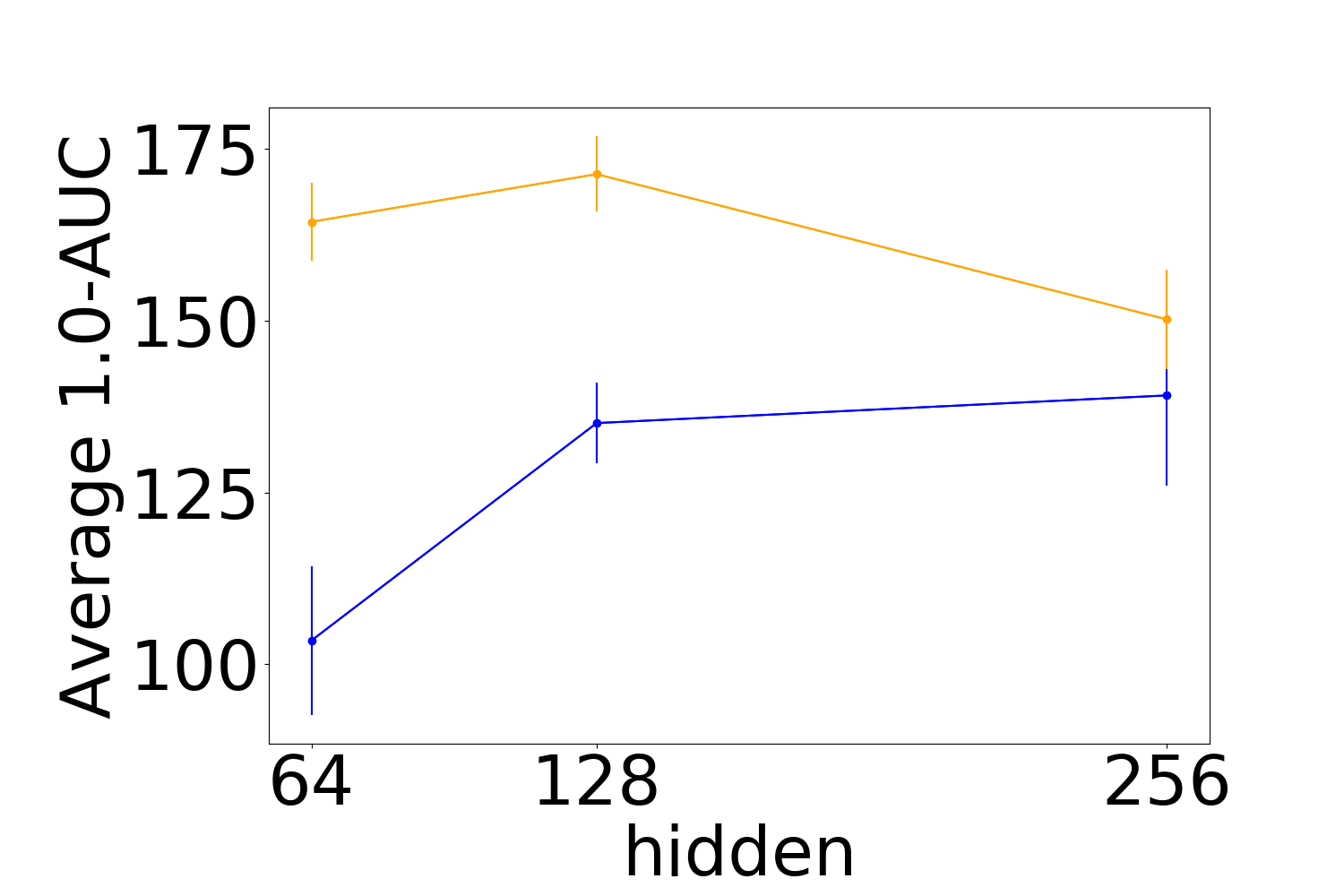}
        \caption{Hidden Layer Width}
    \end{subfigure}
    \caption{Sensitivity curves of DQN with $\epsilon$-greedy (Blue) and DQN with $\epsilon$-IPE (Orange). Each point is the area under the curve for an individual parameter setting, using the best setting of the remaining parameters under the same metric, averaged over 30 runs. Error bars represent standard error.}
\end{figure}

\subsection{Freeway}

\begin{figure}[!htb]
    \centering
    \begin{subfigure}[b]{0.3\linewidth}
        \centering
        \includegraphics[width=\textwidth]{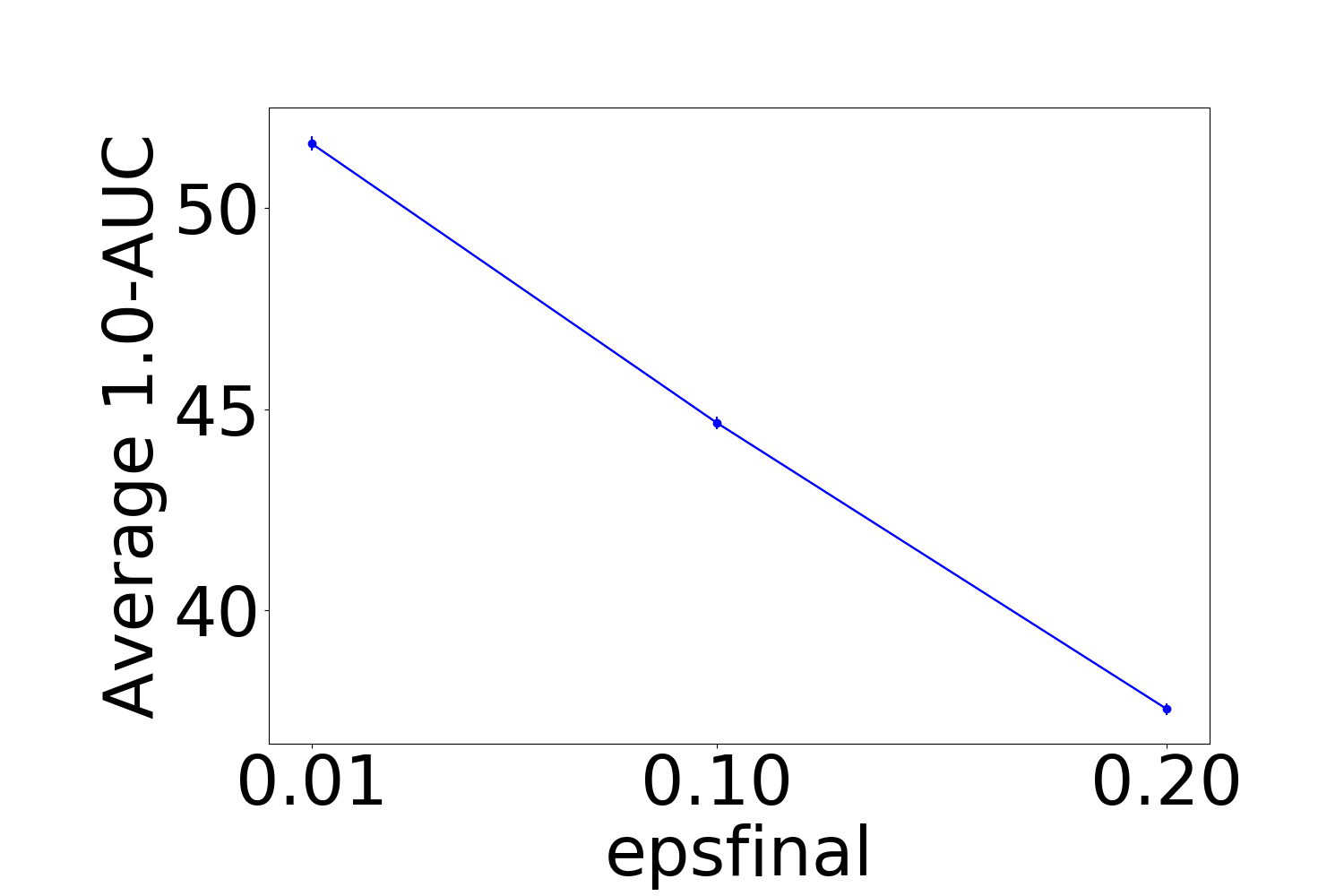}
        \caption{Final $\epsilon$}
    \end{subfigure}
    \begin{subfigure}[b]{0.3\linewidth}
        \centering
        \includegraphics[width=\textwidth]{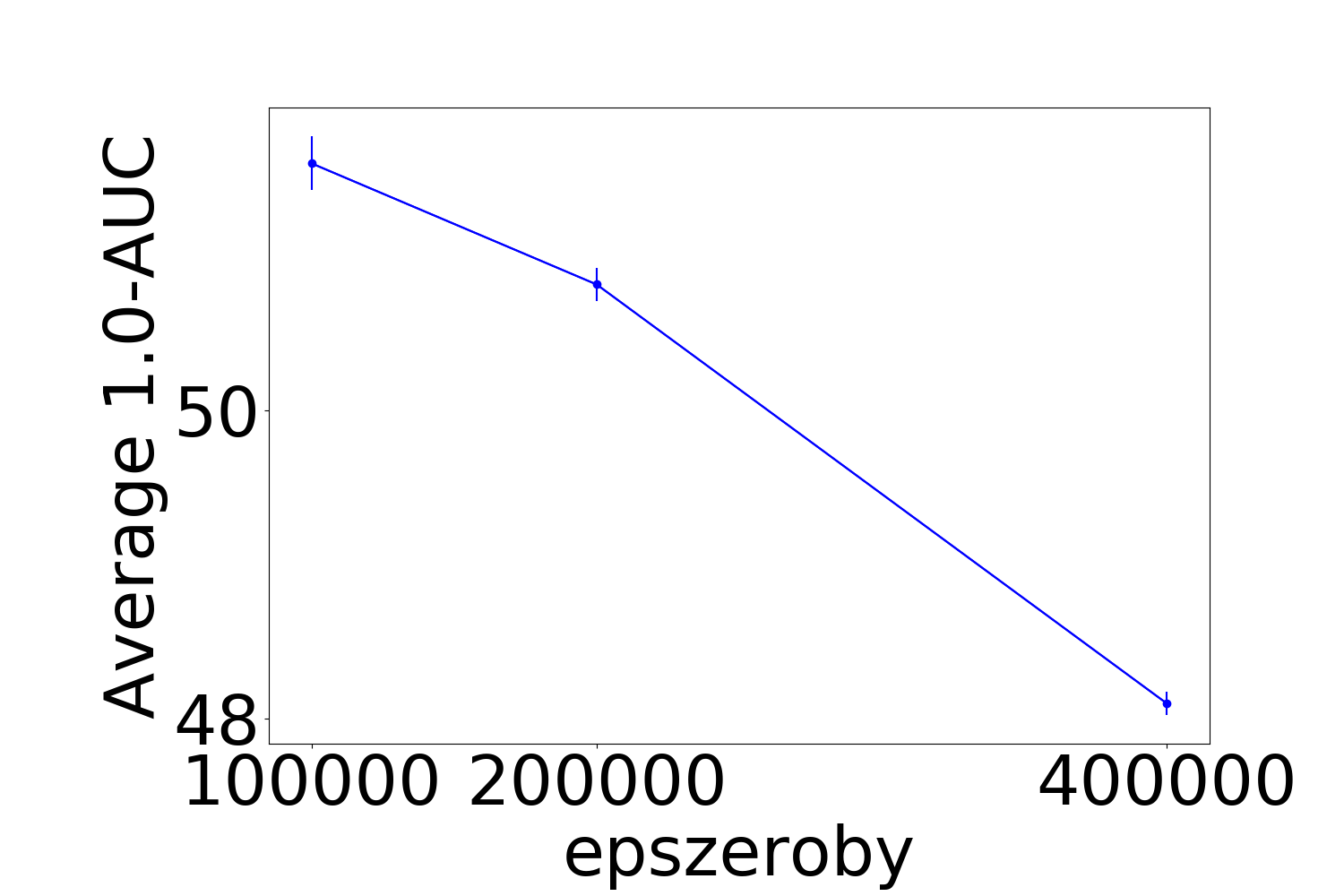}
        \caption{$\epsilon$ Decay Steps}
    \end{subfigure}
    \begin{subfigure}[b]{0.3\linewidth}
        \centering
        \includegraphics[width=\textwidth]{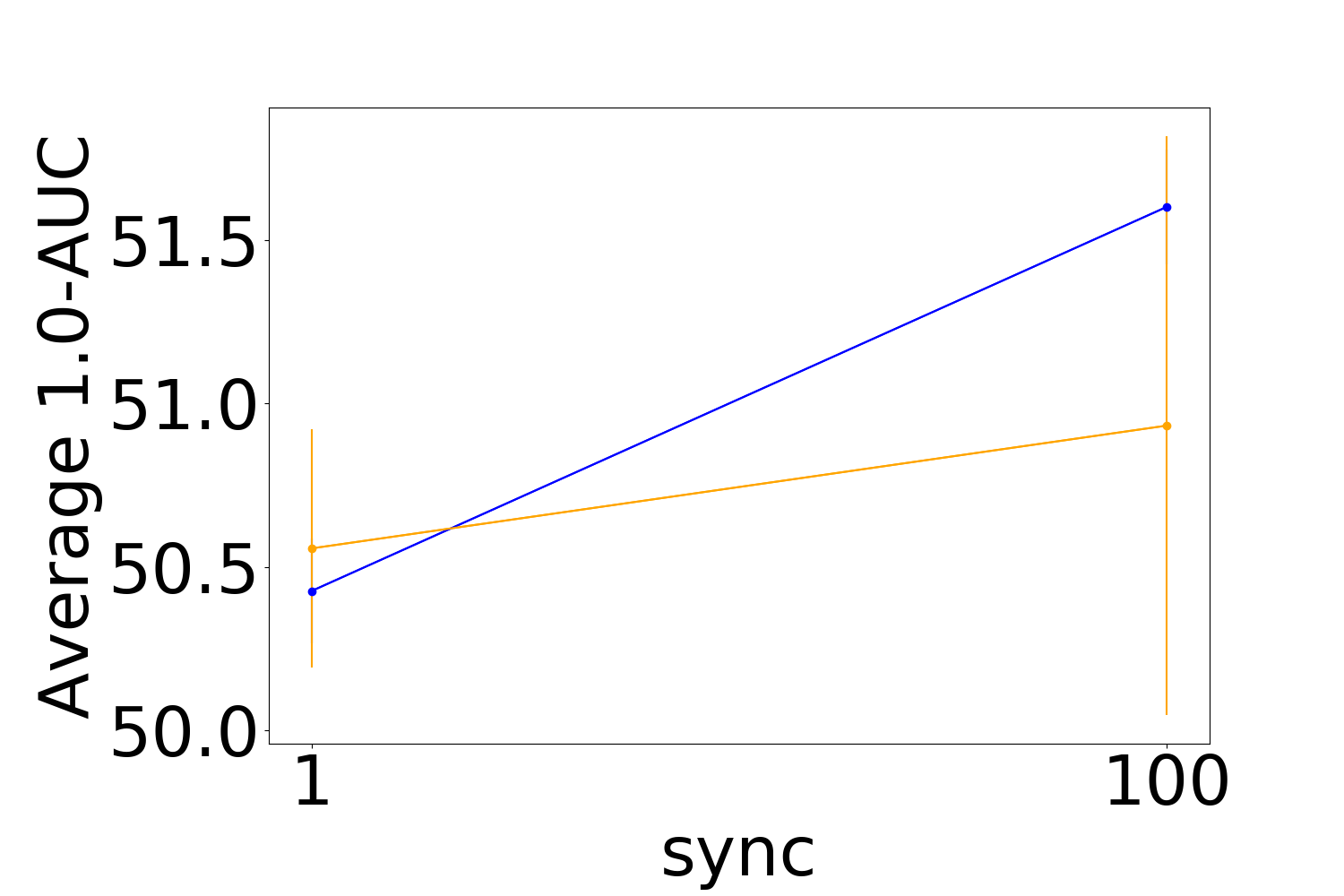}
        \caption{Target Net Update Period}
    \end{subfigure}
    \begin{subfigure}[b]{0.3\linewidth}
        \centering
        \includegraphics[width=\textwidth]{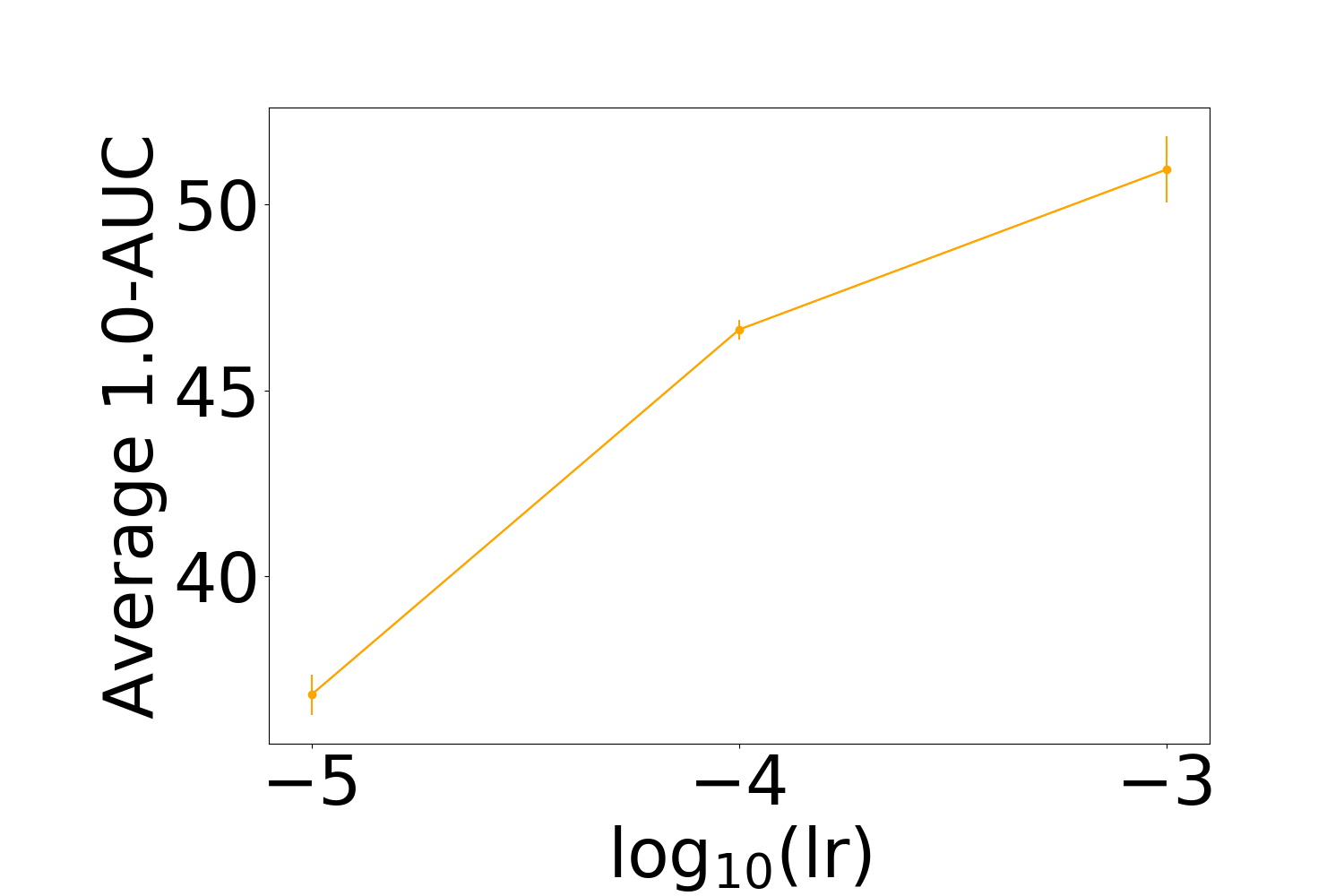}
        \caption{Policy Learning Rate}
    \end{subfigure}
    \begin{subfigure}[b]{0.3\linewidth}
        \centering
        \includegraphics[width=\textwidth]{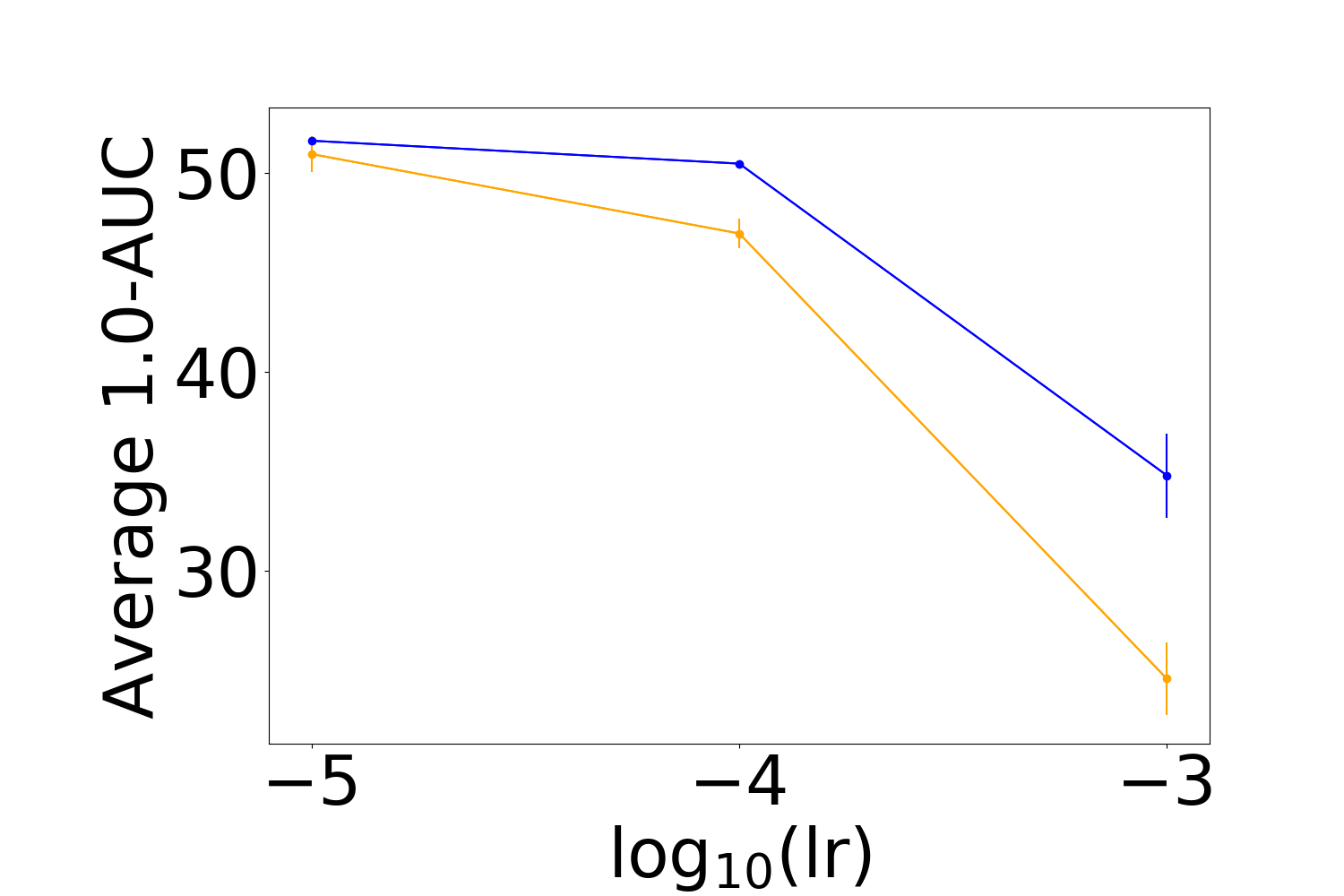}
        \caption{Value Learning Rate}
    \end{subfigure}
    \caption{Sensitivity curves of DQN with $\epsilon$-greedy (Blue) and DQN with $\epsilon$-IPE (Orange). Each point is the area under the curve for an individual parameter setting, using the best setting of the remaining parameters under the same metric, averaged over 30 runs. Error bars represent standard error.}
\end{figure}

\end{document}